\pdfoutput=1

\documentclass[runningheads]{llncs}
\usepackage{graphicx}

\usepackage{tikz}
\usepackage{comment}
\usepackage{amsmath,amssymb} 
\usepackage{color}

\usepackage[accsupp]{axessibility}  


\usepackage{hyperref}       
\hypersetup{colorlinks=false,linkcolor=blue}
\usepackage{url}            
\usepackage{booktabs}       
\usepackage{amsfonts}       
\usepackage{nicefrac}       
\usepackage{microtype}      
\usepackage{xcolor}         
\usepackage{bm}
\usepackage{float}
\usepackage{makecell}
\usepackage{adjustbox}
\usepackage{multirow}
\usepackage{wrapfig}
\usepackage{subcaption}
\usepackage{cite}
\usepackage{bbm}
\usepackage{colortbl}
\usepackage{tabularx}
\usepackage[normalem]{ulem}

\newtheorem{defn}{Definition}



\begin{document}
\pagestyle{headings}
\mainmatter
\def\ECCVSubNumber{4633}  

\title{Equivariant Hypergraph Neural Networks} 

\titlerunning{Equivariant Hypergraph Neural Networks}
%
\author{Jinwoo Kim\inst{1} \and 
Saeyoon Oh\inst{1} \and
Sungjun Cho\inst{2} \and
Seunghoon Hong\inst{1,2}}
\authorrunning{J. Kim et al.}
%
\institute{
$^1$KAIST\\
$^2$LG AI Research\\
}
\maketitle

\begin{abstract}
Many problems in computer vision and machine learning can be cast as learning on hypergraphs that represent higher-order relations.
Recent approaches for hypergraph learning extend graph neural networks based on message passing, which is simple yet fundamentally limited in modeling long-range dependencies and expressive power.
On the other hand, tensor-based equivariant neural networks enjoy maximal expressiveness, but their application has been limited in hypergraphs due to heavy computation and strict assumptions on fixed-order hyperedges.
We resolve these problems and present Equivariant Hypergraph Neural Network (EHNN), the first attempt to realize maximally expressive equivariant layers for general hypergraph learning.
We also present two practical realizations of our framework based on hypernetworks (EHNN-MLP) and self-attention (EHNN-Transformer), which are easy to implement and theoretically more expressive than most message passing approaches.
We demonstrate their capability in a range of hypergraph learning problems, including synthetic $k$-edge identification, semi-supervised classification, and visual keypoint matching, and report improved performances over strong message passing baselines.
Our implementation is available at \url{https://github.com/jw9730/ehnn}.
\keywords{hypergraph neural network, graph neural network, permutation equivariance, semi-supervised classification,  keypoint matching}
\end{abstract}

\section{Introduction}\label{sec:introduction}
Reasoning about a system that involves a set of entities and their relationships requires relational data structures.
Graph represents relational data with nodes and edges, where a node corresponds to an entity and an edge represents a relationship between a pair of nodes.
However, pairwise edges are often insufficient to represent more complex relationships.
For instance, many geometric configurations of entities such as angles and areas can only be captured by considering higher-order relationships between three or more nodes.
Hypergraph is a general data structure that represents such higher-order relationships with hyperedges, \emph{i.e.}, edges associating more than two nodes at a time~\cite{berge1981graphs}.
Thus, it is widely used to represent various visual data such as scenes~\cite{gao20123D, kim2020hypergraph}, feature correspondence~\cite{wang2019neural, belongie2002shape, ray20052D, lowe1999object}, and polygonal mesh~\cite{botsch2008geometric, milano2020primal, yavartanoo2021polynet}, as well as general relational data such as social networks~\cite{tan2011using, li2013link, bu2010music}, biological networks~\cite{gu2017functional, klimm2021hypergraphs}, linguistic structures~\cite{ding2020be}, and combinatorial optimization problems~\cite{kalai1997linear}.

To learn deep representation of hypergraphs, recent works developed specialized hypergraph neural networks by generalizing the message passing operator of graph neural networks (GNNs)~\cite{feng2019hypergraph, bai2021hypergraph, dong2020hnhn, huang2021unignn, arya2020hypersage, chien2022you}.
In these networks, node and (hyper)edge features are updated recurrently by aggregating features of neighboring nodes and edges according to the connectivity of the input (hyper)graph.
Despite such simplicity, message-passing networks have fundamental limitations.
Notably, the local and recurrent characteristics of message passing prevent them from handling dependencies between any pair of nodes with a distance longer than the number of propagation steps~\cite{gu2020implicit, kim2021transformers}.
It is also known that this locality is related to oversmoothing, which hinders the use of deep networks~\cite{li2018deeper, cai2020a, oono2020graph, huang2021unignn}.

A more general and potentially powerful approach for hypergraph learning is to find \emph{all} possible permutation equivariant linear operations on the input (hyper)graph and use them as bases of linear layers that constitute \emph{equivariant GNNs}~\cite{maron2019invariant, velikovic2022message}.
While message passing is one specific, locally restricted case of equivariant operation, the maximal set of equivariant operations extends further, involving various global interactions over possibly disconnected nodes and (hyper)edges~\cite{kim2021transformers}.
The formulation naturally extends to higher-order layers that can handle hypergraphs in principle and even mixed-order layers where input and output are of different orders (e.g., graph in, hypergraph out).
Despite the advantages, the actual usage of equivariant GNNs has been mainly limited to sets and graphs~\cite{zaheer2017deep, maron2019invariant, serviansky2020set, kim2021transformers}, and they have not been realized for general hypergraph learning.
This is mainly due to the prohibitive parameter dimensionality of higher-order layers and a bound in the input and output hyperedge orders that comes from fixed-order tensor representation.

We propose \emph{Equivariant Hypergraph Neural Network (EHNN)} as the first attempt to realize equivariant GNNs for general hypergraph learning.
We begin by establishing a simple connection between sparse, arbitrarily structured hypergraphs and dense, fixed-order tensors, from which we derive the maximally expressive equivariant linear layer for undirected hypergraphs.
Then, we impose an intrinsic parameter sharing within the layer via hypernetworks~\cite{ha2017hypernetworks}, which (1) retains maximal expressiveness, (2) practically bounds the number of parameters, and (3) allows processing hyperedges with arbitrary and possibly unseen orders.
Notably, the resulting layer (EHNN-MLP) turns out to be a simple augmentation of an MLP-based message passing with hyperedge order embedding and global pooling.
This leads to efficient implementation and allows the incorporation of any advances in the message-passing literature.
We further extend into a Transformer counterpart (EHNN-Transformer) by introducing self-attention to achieve a higher expressive power with the same asymptotic cost.
In a challenging synthetic $k$-edge identification task where message passing networks fail, we show that the high expressiveness of EHNN allows fine-grained global reasoning to perfectly solve the task, and demonstrate their generalizability towards unseen hyperedge orders.
We also demonstrate their state-of-the-art performance in several transductive and inductive hypergraph learning benchmarks, including semi-supervised classification and visual correspondence matching.

\section{Preliminary and Related Work}\label{sec:preliminary}
Let us introduce some preliminary concepts of permutation equivariant learning~\cite{maron2019invariant, serviansky2020set, kim2021transformers}.
We first describe higher-order tensors and then maximally expressive permutation equivariant linear layers that compose equivariant GNNs~\cite{maron2019invariant}.

We begin with some notations.
We denote a set as $\{a, ..., b\}$, a tuple as $(a, ..., b)$, and denote $[n]=\{1, ..., n\}$.
We denote the space of order-$k$ tensors as $\mathbb{R}^{n^k \times d}$ with feature dimensionality $d$.
For an order-$k$ tensor $\mathbf{A}\in\mathbb{R}^{n^k\times d}$, we use a multi-index $\mathbf{i}=(i_1, ..., i_k)\in[n]^k$ to index an element $\mathbf{A}_{\mathbf{i}} = \mathbf{A}_{i_1, ..., i_k}\in\mathbb{R}^d$.
Let $S_n$ denote all permutations of $[n]$.
A node permutation $\pi \in S_n$ acts on a multi-index~$\mathbf{i}$ by $\pi(\mathbf{i}) = (\pi(i_1), ..., \pi(i_k))$ and acts on a tensor $\mathbf{A}$ by $(\pi\cdot\mathbf{A})_{\mathbf{i}}=\mathbf{A}_{\pi^{-1}(\mathbf{i})}$.

\subsubsection{Higher-order Tensors}
Prior work on equivariant learning regard hypergraph data as $G=(V, \mathbf{A})$, with $V$ a set of $n$ nodes and $\mathbf{A}\in\mathbb{R}^{n^k \times d}$ a tensor encoding hyperedge features~\cite{maron2019invariant}.
The order $k$ of the tensor $\mathbf{A}$ indicates the type of hypergraph.
First-order tensor encodes a set of features (\emph{e.g.}, point cloud) where $\mathbf{A}_i$ is the feature of node $i$.
Second-order tensor encodes pairwise edge features (\emph{e.g.}, adjacency) where $\mathbf{A}_{i_1, i_2}$ is the feature of edge $(i_1, i_2)$.
Generally, an order-$k$ tensor encodes \emph{hyperedge} features (\emph{e.g.}, mesh normal) where $\mathbf{A}_{i_1, ..., i_k}$ is the feature of hyperedge $(i_1, ..., i_k)$.
We begin our discussion from the tensors, but will arrive at the familiar notion of hypergraphs with any-order undirected hyperedges.

\subsubsection{Permutation Invariance and Equivariance}
In (hyper)graph learning, we are interested in building a function $f$ that takes a (higher-order) tensor $\mathbf{A}$ as input and outputs some value.
Since the tensor representation of a graph changes dramatically with the permutation of node numbering, the function $f$ should be invariant or equivariant under node permutations.
Formally, if the output is a single vector, $f$ is required to be \emph{permutation invariant}, always satisfying $f(\pi\cdot\mathbf{A}) = f(\mathbf{A})$; if the output is a tensor, $f$ is required to be \emph{permutation equivariant}, always satisfying $f(\pi\cdot\mathbf{A})=\pi\cdot f(\mathbf{A})$~\cite{maron2019invariant}.
As a neural network $f$ is often built as a stack of linear layers and non-linearities, its construction reduces to finding invariant and equivariant \emph{linear} layers.

\subsubsection{Invariant and Equivariant Linear Layers}
Many (hyper)graph neural networks rely on message passing~\cite{gilmer2017neural, feng2019hypergraph}, which is a restricted equivariant operator.
Alternatively, \emph{maximally expressive} linear layers for higher-order tensors have been characterized by Maron~el~al.~(2019)~\cite{maron2019invariant}.
Specifically, invariant linear layers $L_{k\to 0}: \mathbb{R}^{n^k\times d}\to\mathbb{R}^{d'}$ and equivariant linear layers $L_{k\to l}: \mathbb{R}^{n^k\times d}\to\mathbb{R}^{n^l\times d'}$ were identified (note that invariance is a special case of equivariance with $l=0$).
Given an order-$k$ input $\mathbf{A}\in\mathbb{R}^{n^k\times d}$, the order-$l$ output of an equivariant linear layer $L_{k\rightarrow l}$ is written as follows, with indicator $\mathbbm{1}$ and multi-indices $\mathbf{i}\in[n]^k,\mathbf{j}\in[n]^l$:
\begin{align}\label{eqn:equivariant_linear_layer}
    L_{k\rightarrow l}(\mathbf{A})_{\mathbf{j}} = \sum_{\mu}{\sum_{\mathbf{i}}{\mathbbm{1}_{(\mathbf{i}, \mathbf{j})\in\mu}\mathbf{A}_{\mathbf{i}}w_{\mu}}} + \sum_{\lambda}{\mathbbm{1}_{\mathbf{j}\in\lambda}b_{\lambda}},
\end{align}
where $w_\mu\in\mathbb{R}^{d\times d'}$, $b_\lambda\in\mathbb{R}^{d'}$ are weight and bias parameters, and $\mu$ and $\lambda$ are \emph{equivalence classes} of order-$(k+l)$ and order-$l$ multi-indices, respectively.

The equivalence classes can be interpreted as a partitioning of a multi-index space.
The order-$(k+l)$ equivalence classes $\mu$ for the weight specifies a partitioning of the space of multi-indices $[n]^{k+l}$, and order-$l$ equivalence classes $\lambda$ for the bias specifies a partitioning of the space of multi-indices $[n]^l$.
The total number of the equivalence classes (the size of partitioning) depends only on orders $k$ and $l$.
With $\text{b}(k)$ the $k$-th Bell number, there exist $\text{b}(k+l)$ equivalence classes $\mu$ for the weight and $\text{b}(l)$ equivalence classes $\lambda$ for the bias.
For the first-order layer $L_{1\to 1}$, there exist $\text{b}(2)=2$ equivalence classes $\mu_1$, $\mu_2$ for the weight, specifying the partitioning of $[n]^2$ as $\{\mu_1, \mu_2\}$ where $\mu_1 = \{(i, j)|i = j\}$ and $\mu_2 = \{(i, j)|i \neq j\}$.
For further details including derivations and illustrative descriptions, we guide the readers to Maron~et~al.~(2019)~\cite{maron2019invariant} and Kim~et~al.~(2021)~\cite{kim2021transformers}.

\subsubsection{Equivariant GNNs}
Based on the maximally expressive equivariant linear layers (Eq.~\eqref{eqn:equivariant_linear_layer}), a bouquet of permutation invariant or equivariant neural networks was formulated.
A representative example is \emph{equivariant GNN}~\cite{maron2019invariant} (also called $k$-IGN~\cite{maron2019provably, chen2020can}), built by stacking the equivariant linear layers and non-linearity.
Their theoretical expressive power has been extensively studied~\cite{zaheer2017deep, keriven2019universal, maron2019provably, maron2019on, chen2020can}, leading to successful variants in set and graph learning~\cite{maron2020on, serviansky2020set, kim2021transformers, kim2022pure}.
In particular, practical variants such as Higher-order Transformer~\cite{kim2021transformers} and TokenGT~\cite{kim2022pure} unified the equivariant GNNs and the Transformer architecture~\cite{vaswani2017attention, lee2019set}, surpassing the performance of message-passing GNNs in large-scale molecular graph regression.

\subsubsection{Challenges in Hypergraph Learning}
Despite the theoretical and practical advantages, to our knowledge, equivariant GNN and its variants were rarely considered for general hypergraph learning involving higher-order data~\cite{albooyeh2019incidence}, and never implemented except for highly restricted $k$-uniform hyperedge prediction~\cite{serviansky2020set, kim2021transformers}.
We identify two main challenges.
First, although the asymptotic cost can be reduced to a practical level with recent tricks~\cite{kim2021transformers}, the number of parameters still grows rapidly to Bell number of input order~\cite{berend2010improved}.
This makes any layer $L_{k\to l}$ with $k + l > 4$ challenging to use, as $k + l = 5$ already leads to $52$ weight matrices.
Second, in inductive learning~\cite{william2017inductive, zhang2018an} where a model is tested on unseen nodes or hypergraphs, the model can be required to process unseen-order hyperedges that possibly surpass the max order in the training data.
This is not straightforward for equivariant GNNs, because the fixed-order tensors that underlie $L_{k\to l}$ require pre-specifying the max hyperedge order $(k, l)$ that the model can process.

\section{Equivariant Hypergraph Neural Network}\label{sec:methods}
We now proceed to our framework on practical equivariant GNNs for general hypergraph data.
All proofs can be found in Appendix~\ref{sec:apdx_proofs}.
In practical setups that assume undirected hypergraphs~\cite{feng2019hypergraph, yadati2019hypergcn, arya2020hypersage, dong2020hnhn, bai2021hypergraph, chien2022you}, a hypergraph $G = (V, E, \mathbf{X})$ is defined by a set of $n$ nodes $V$, a set of $m$ hyperedges $E$, and features $\mathbf{X}\in\mathbb{R}^{m\times d}$ of the hyperedges.
Each hyperedge $e\in E$ is a subset of node set $V$, and its order $|e|$ indicates its type.
For example, a first-order edge $\{i\}$ represents an $i$-th node; a second-order edge $\{i, j\}$ represents a pairwise link of $i$-th and $j$-th nodes; in general, an order-$k$ edge $\{i_1, ..., i_k\}$ represents a hyperedge that links $k$ nodes.
By $\mathbf{X}_e\in\mathbb{R}^d$ we denote the feature attached to a hyperedge $e$.
We assume that node and hyperedge features are both $d$-dimensional~\cite{maron2019invariant, maron2019provably, kim2021transformers, chen2020can}; to handle different dimensionalities, we simply let $d=(d_v+d_e)$ and place node features at first $d_v$ channels and hyperedge features at last $d_e$ channels.

Note that the above notion of hypergraphs $(V, E, \mathbf{X})$ does not directly align with the higher-order tensors $\mathbf{A}\in\mathbb{R}^{n^k\times d}$ described in Section~\ref{sec:preliminary}.
Unlike them, hypergraphs of our interest are sparse, their hyperedges are undirected, and each hyperedge contains unique node indices.
As equivariant GNNs (Section~\ref{sec:preliminary}) build upon higher-order tensors,
it is necessary to establish some connection between the hypergraphs and the higher-order tensors.

\subsection{Hypergraph as a Sequence of Higher-order Tensors}\label{sec:hypergraph_sequence}
\begin{figure}[!t]
    \centering
    \includegraphics[width=0.9\textwidth]{./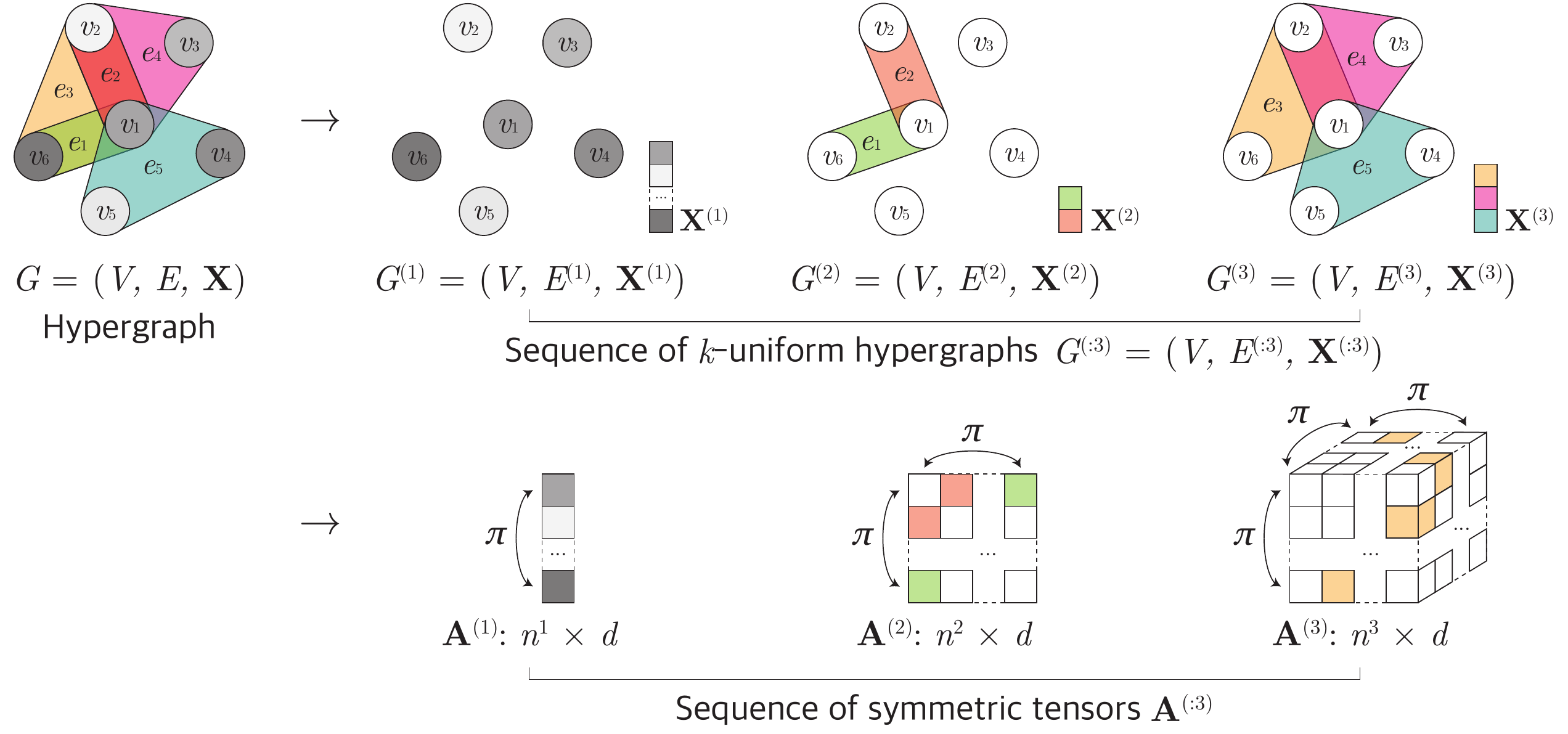}
    \caption{
    Example of a hypergraph represented as a sequence of $k$-uniform hypergraphs (Definition~\ref{defn:hypergraph_sequence}), or equivalently a sequence of symmetric higher-order tensors (Definition~\ref{defn:hypergraph_tensor_sequence}).
    Note that nodes are handled as first-order hyperedges.
    }
    \label{fig:hypergraph_sequence}
\end{figure}

To describe hypergraphs $(V, E, \mathbf{X})$ using higher-order tensors $\mathbf{A}\in\mathbb{R}^{n^k\times d}$, it is convenient to introduce \emph{$k$-uniform hypergraphs}.
A hypergraph is $k$-uniform if all of its hyperedges are exactly of order-$k$.
For example, a graph without self-loops is 2-uniform, and a triangle mesh is 3-uniform.
From that, we can define equivalent representation of a hypergraph as a \emph{sequence} of $k$-uniform hypergraphs:
\begin{defn}\label{defn:hypergraph_sequence}
The sequence representation of a hypergraph $(V, E, \mathbf{X})$ with max hyperedge order $K$ is a sequence of $k$-uniform hypergraphs with $k\leq K$, written as $(V, E^{(k)}, \mathbf{X}^{(k)})_{k\leq K}=(V, E^{(:K)}, \mathbf{X}^{(:K)})$ where $E^{(k)}$ as the set of all order-$k$ hyperedges in $E$ and $\mathbf{X}^{(k)}$ as a row stack of features $\{\mathbf{X}_e|e\in E^{(k)}\}$.
\end{defn}
As the collection $(E^{(k)})_{k\leq K}$ forms a partition of $E$, we can retrieve the original hypergraph $(V, E, \mathbf{X})$ from its sequence representation $(V, E^{(k)}, \mathbf{X}^{(k)})_{k\leq K}$ by using the union of $(E^{(k)})_{k\leq K}$ for $E$ and the concatenation of $(\mathbf{X}^{(k)})_{k\leq K}$ for $\mathbf{X}$.

The concept of uniform hypergraph is convenient because we can draw an equivalent representation as a \emph{symmetric} higher-order tensor~\cite{chien2022you, kofidis2002on}.
An order-$k$ tensor $\mathbf{A}$ is symmetric if its entries are invariant under reordering of indices, \emph{e.g.}, $\mathbf{A}_{ij}=\mathbf{A}_{ji}$, $\mathbf{A}_{ijk}=\mathbf{A}_{kij}=...$, and so on.
From that, we can define the equivalent representation of a $k$-uniform hypergraph as an order-$k$ symmetric tensor\footnote{Higher-order tensors can in principle represent directed hypergraphs as well; we constrain them to be symmetric to specifically represent undirected hypergraphs.}:
\begin{defn}\label{defn:hypergraph_tensor}
The tensor representation of $k$-uniform hypergraph $(V, E^{(k)}, \mathbf{X}^{(k)})$ is an order-$k$ symmetric tensor $\mathbf{A}^{(k)}\in\mathbb{R}^{n^k\times d}$ defined as follows:
\begin{align}\label{eqn:hypergraph_tensor}
    \mathbf{A}^{(k)}_{(i_1, ..., i_k)} = \left\{
        \begin{array}{cc}
            \mathbf{X}^{(k)}_e   &  \text{if }e=\{i_1, ..., i_k\}\in E^{(k)} \\
            0   &  {\textnormal{ otherwise}}
        \end{array}\right..
\end{align}
\end{defn}
From $\mathbf{A}^{(k)}$, we can retrieve the original $k$-uniform hypergraph $(V, E^{(k)}, \mathbf{X}^{(k)})$ by first identifying the indices of all nonzero entries of $\mathbf{A}^{(k)}$ to construct $E^{(k)}$, and then using $E^{(k)}$ to index $\mathbf{A}^{(k)}$ to construct $\mathbf{X}^{(k)}$.

Now, directly combining Definition~\ref{defn:hypergraph_sequence} and \ref{defn:hypergraph_tensor}, we can define the equivalent representation of a hypergraph as a sequence of higher-order tensors:
\begin{defn}\label{defn:hypergraph_tensor_sequence}
The tensor sequence representation of a hypergraph $(V, E, \mathbf{X})$ with maximum hyperedge order $K$ is a sequence of symmetric higher-order tensors $(\mathbf{A}^{(k)})_{k\leq K}=\mathbf{A}^{(:K)}$, where each $\mathbf{A}^{(k)}$ is the tensor representation (Definition~\ref{defn:hypergraph_tensor}) of each $k$-uniform hypergraph $(V, E^{(k)}, \mathbf{X}^{(k)})$ that comes from the sequence representation of the hypergraph $(V, E^{(k)}, \mathbf{X}^{(k)})_{k\leq K}=(V, E^{(:K)}, \mathbf{X}^{(:K)})$ (Definition~\ref{defn:hypergraph_sequence}).
\end{defn}
An illustration is in Fig.~\ref{fig:hypergraph_sequence}.
Note that we can include node features as $\mathbf{A}^{(1)}$.
Now, our problem of interest reduces to identifying a function $f$ that operates on sequences of tensors $\mathbf{A}^{(:K)}$ that represent hypergraphs.
The concept of permutation invariance and equivariance (Section~\ref{sec:preliminary}) applies similarly here.
A node permutation $\pi\in S_n$ acts on a tensor sequence $\mathbf{A}^{(:K)}$ by jointly acting on each tensor, $\pi\cdot\mathbf{A}^{(:K)}=(\pi\cdot\mathbf{A}^{(k)})_{k\leq K}$.
An invariant $f$ always satisfies $f(\pi\cdot \mathbf{A}^{(:K)})=f(\mathbf{A}^{(:K)})$, and an equivariant $f$ always satisfies $f(\pi\cdot \mathbf{A}^{(:K)})=\pi\cdot f(\mathbf{A}^{(:K)})$.

\subsection{Equivariant Linear Layers for Hypergraphs}\label{sec:equivariant_linear_layer_hypergraph}
In Definition~\ref{defn:hypergraph_tensor_sequence}, we represented a hypergraph as a sequence of symmetric higher-order tensors $(\mathbf{A}^{(k)})_{k\leq K}$, each tensor $\mathbf{A}^{(k)}$ representing a $k$-uniform hypergraph.
We now utilize the equivariant linear layers $L_{k\to l}:\mathbb{R}^{n^k\times d}\to\mathbb{R}^{n^l\times d'}$~(Eq.~\eqref{eqn:equivariant_linear_layer}) in Section~\ref{sec:preliminary} to formalize equivariant linear layers that input and output hypergraphs.
Our basic design is to find and combine all pairwise linear mappings between tensors (\emph{i.e.}, $k$-uniform hypergraphs) of input and output sequences.
Although seemingly simple, we prove that this gives the \emph{maximally expressive} equivariant linear layer for hypergraphs.

\subsubsection{Equivariant Linear Layers for $k$-uniform Hypergraphs}
In Section~\ref{sec:preliminary}, we argued that the equivariant linear layer $L_{k\to l}$ cannot be practically used due to the prohibitive number of $\text{b}(k+l)$ weights and $\text{b}(l)$ biases.
Yet, when the input and output tensors are restricted to $k$- and $l$-uniform hypergraphs respectively, we can show that the layer reduces to $\mathcal{O}(k+l)$ weights and a single bias:

\begin{figure}[!t]
    \centering
    \includegraphics[width=0.9\textwidth]{./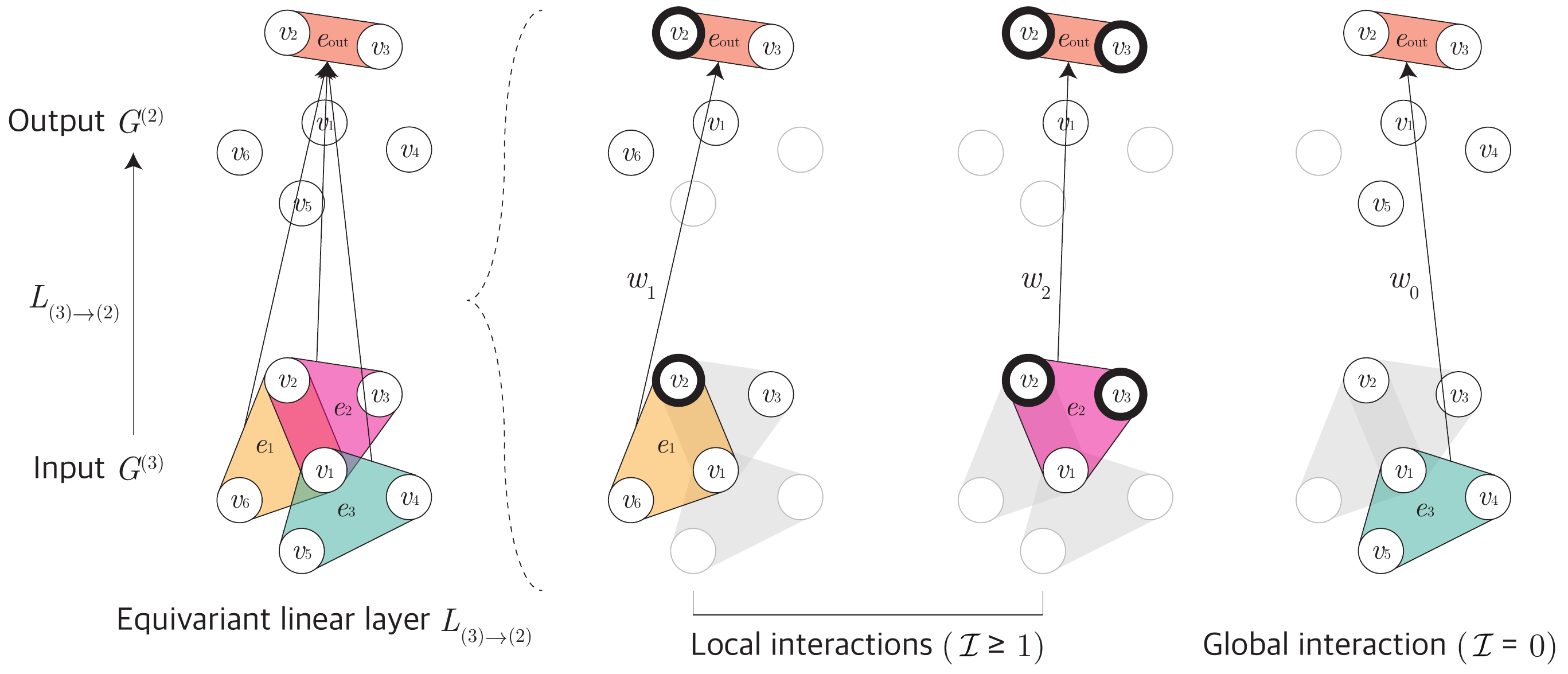}
    \caption{
    Conceptual illustration of an equivariant linear layer $L_{(3)\to(2)}$ as in Eq.~\eqref{eqn:equivariant_linear_layer_uniform}.
    The layer uses different weights $w_\mathcal{I}$ for different overlaps $\mathcal{I}$ between input and output hyperedges.
    This gives rise to local interactions similar to message-passing ($\mathcal{I} \geq 1$) and global interactions ($\mathcal{I} = 0$) implemented as global sum-pooling.
    }
    \label{fig:equivariant_layer_uniform}
    \vspace{-0.3cm}
\end{figure}

\begin{proposition}\label{proposition:equivariant_linear_layer_uniform}
Assume that the input and output of equivariant linear layer $L_{k\to l}$~(Eq.~\eqref{eqn:equivariant_linear_layer}) are constrained to symmetric tensors that represent $k$- and $l$-uniform hypergraphs respectively (Eq.~\eqref{eqn:hypergraph_tensor}).
Then it reduces to $L_{(k)\to (l)}$ below:
\begin{align}\label{eqn:equivariant_linear_layer_uniform}
    L_{(k)\to (l)}(\mathbf{A}^{(k)})_{\mathbf{j}} = \mathbbm{1}_{|\mathbf{j}|=l}\left(
    \sum_{\mathcal{I}=1}^{\textnormal{min}(k, l)} \sum_{\mathbf{i}}\mathbbm{1}_{|\mathbf{i}\cap\mathbf{j}|=\mathcal{I}} \mathbf{A}^{(k)}_{\mathbf{i}}w_\mathcal{I}
    + \sum_{\mathbf{i}}\mathbf{A}^{(k)}_{\mathbf{i}}w_0
    + b_l
    \right),
\end{align}
where $w_o, w_\mathcal{I}\in\mathbb{R}^{d\times d'}$, $b_l\in\mathbb{R}^{d'}$ are weight and bias, $|\mathbf{i}|$ is number of distinct elements in $\mathbf{i}$, and $|\mathbf{i}\cap\mathbf{j}|$ is number of distinct intersecting elements in $\mathbf{i}$ and $\mathbf{j}$.
\end{proposition}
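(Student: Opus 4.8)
The plan is to start from the general equivariant linear layer of Eq.~\eqref{eqn:equivariant_linear_layer} and specialize it under the two constraints (symmetric $k$-uniform input $\mathbf{A}^{(k)}$, symmetric $l$-uniform output), handling the bias sum and the weight sum separately; the feature dimensions play no role here, since each $w_\mu\in\mathbb{R}^{d\times d'}$ is simply carried along unchanged. The bias is immediate: for any output multi-index $\mathbf{j}$ with $|\mathbf{j}|=l$ all coordinates are distinct, so among the $\text{b}(l)$ equivalence classes $\lambda$ of $[n]^l$ only the discrete one $\lambda^*$ satisfies $\mathbf{j}\in\lambda^*$; hence $\sum_\lambda\mathbbm{1}_{\mathbf{j}\in\lambda}b_\lambda$ collapses to $b_l:=b_{\lambda^*}$, and the $l$-uniform output constraint makes it vanish on $|\mathbf{j}|<l$, producing the factor $\mathbbm{1}_{|\mathbf{j}|=l}\,b_l$.

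For the weight sum, I would observe that $\mathbf{A}^{(k)}_{\mathbf{i}}=0$ unless $|\mathbf{i}|=k$ and that we only evaluate at $\mathbf{j}$ with $|\mathbf{j}|=l$, so the only equivalence classes $\mu$ of $[n]^{k+l}$ that contribute are those whose first $k$ and last $l$ coordinates each lie in distinct blocks; such $\mu$ are in bijection with partial matchings $M\subseteq[k]\times[l]$ (a two-element block $\{a,k+b\}$ pairs input coordinate $a$ with output coordinate $b$). For a fixed $(\mathbf{i},\mathbf{j})$ with $|\mathbf{i}|=k,\,|\mathbf{j}|=l$ exactly one such $\mu$ matches, namely the one whose matching is $\{(a,b):i_a=j_b\}$, of size $|\mathbf{i}\cap\mathbf{j}|$, so the double sum rewrites as a sum over matchings. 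I then collapse the weights in two stages. First, the symmetry $\mathbf{A}^{(k)}_{\alpha(\mathbf{i})}=\mathbf{A}^{(k)}_{\mathbf{i}}$ ($\alpha\in S_k$), applied by substituting $\mathbf{i}\mapsto\alpha(\mathbf{i})$ in the inner sum, shows the contribution of $\mu$ depends only on the set $B\subseteq[l]$ of output coordinates hit by its matching, so one may merge $v_B:=\sum_{\mu\mapsto B}w_\mu$ and reduce the summand to $\sum_{\mathbf{i}}\mathbbm{1}_{\mathbf{i}\cap\mathbf{j}=\{j_b:b\in B\}}\mathbf{A}^{(k)}_{\mathbf{i}}v_B$ up to a combinatorial constant. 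Second, the output symmetry $L(\mathbf{A}^{(k)})_{\beta(\mathbf{j})}=L(\mathbf{A}^{(k)})_{\mathbf{j}}$ ($\beta\in S_l$) forces $v_B$ to depend only on $|B|$: for $n$ large this is pinned down by feeding in $\mathbf{A}^{(k)}$ equal to a single hyperedge with a prescribed intersection with $\mathbf{j}$, and for general $n$ a coefficient vector $(v_B)$ and each of its $S_l$-permutations realize the same restricted map, hence so does its orbit average, which depends only on $|B|$. Summing $\mathbbm{1}_{\mathbf{i}\cap\mathbf{j}=\{j_b:b\in B\}}$ over all $B$ of fixed size $\mathcal{I}$ gives $\mathbbm{1}_{|\mathbf{i}\cap\mathbf{j}|=\mathcal{I}}$, so after absorbing constants into weights $\hat w_\mathcal{I}$ the output becomes $\mathbbm{1}_{|\mathbf{j}|=l}\sum_{\mathcal{I}=0}^{\min(k,l)}\sum_{\mathbf{i}}\mathbbm{1}_{|\mathbf{i}\cap\mathbf{j}|=\mathcal{I}}\mathbf{A}^{(k)}_{\mathbf{i}}\hat w_\mathcal{I}$; splitting off the $\mathcal{I}=0$ term as a global-pooling weight $w_0:=\hat w_0$ and setting $w_\mathcal{I}:=\hat w_\mathcal{I}-\hat w_0$ for $\mathcal{I}\geq 1$ gives exactly Eq.~\eqref{eqn:equivariant_linear_layer_uniform}. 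The converse inclusion is routine: any layer of the form Eq.~\eqref{eqn:equivariant_linear_layer_uniform} is linear, $S_n$-equivariant, and its summand depends on $\mathbf{j}$ only through $|\mathbf{j}|$ and the sets $\mathbf{i}\cap\mathbf{j}$, hence its output is a symmetric $l$-uniform tensor, so the two families coincide.

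The main obstacle is exactly this weight-collapsing step, where the $\text{b}(k+l)$ nominal weights of Eq.~\eqref{eqn:equivariant_linear_layer} must be shown to degenerate to the $\min(k,l)+1$ effective ones; both symmetries are genuinely needed (input symmetry alone reduces the dependence only to the output-coordinate subsets $B\subseteq[l]$, not to their sizes), and the output-symmetry step needs a small non-degeneracy argument to pass from an identity among output tensors to an identity among weight matrices. A cleaner but heavier alternative, worth noting as a remark, is purely representation-theoretic: symmetric $k$-uniform tensors are isomorphic as $S_n$-modules to the permutation module $\mathbb{R}^{\binom{[n]}{k}}$ (an $S_n$-invariant subspace of $\mathbb{R}^{n^k}$ admitting an equivariant complement), so the space of equivariant linear maps into symmetric $l$-uniform tensors has dimension equal to the number of $S_n$-orbits on pairs $(A,B)$ of a $k$-set and an $l$-set, which for $n\geq k+l$ equals $\min(k,l)+1$ (parametrized by $|A\cap B|$), with orbit-indicator maps $\mathbf{A}\mapsto\big(\mathbf{j}\mapsto\sum_{|\mathbf{i}\cap\mathbf{j}|=\mathcal{I}}\mathbf{A}_{\mathbf{i}}\big)$ as a basis; together with the one-dimensional space of constant bias maps this recovers the same parameter count and the same layer as Eq.~\eqref{eqn:equivariant_linear_layer_uniform}.
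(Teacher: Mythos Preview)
Your proof is correct, and the core idea—kill the irrelevant equivalence classes using $|\mathbf{i}|=k$, $|\mathbf{j}|=l$, then use the input/output axis symmetries to merge the survivors down to dependence on $|\mathbf{i}\cap\mathbf{j}|$, then reparametrize the $\mathcal{I}=0$ term as global pooling—is the same as the paper's. The main organizational difference is the order in which you apply the constraints. The paper first proves a standalone lemma showing that, for \emph{any} symmetric input and output, $L_{k\to l}$ reduces to a layer indexed by coarser equivalence classes $\alpha,\beta$ that are invariant under independent axis permutations of the input and output blocks (node permutation $\times S_k\times S_l$); only then does it impose the $k$/$l$-uniform constraint, which in that coarser basis immediately leaves a single bias and the $\min(k,l)+1$ weight classes indexed by $\mathcal{I}$. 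You instead impose uniformity first to cut the surviving $\mu$'s down to partial matchings $M\subseteq[k]\times[l]$, and then apply the two axis symmetries by hand to collapse $M\mapsto B\mapsto|B|$. Both routes are sound; the paper's factoring is more modular (the symmetric-tensor lemma is reused verbatim in the proof of Theorem~\ref{thm:equivariant_linear_layer_hypergraph}), while your direct argument is lighter for this proposition in isolation and makes the combinatorics of the surviving classes (partial matchings) more explicit. Your representation-theoretic remark—identifying the symmetric $k$-uniform tensors with the permutation module $\mathbb{R}^{\binom{[n]}{k}}$ and reading off $\min(k,l)+1$ as the number of $S_n$-orbits on pairs $(A,B)\in\binom{[n]}{k}\times\binom{[n]}{l}$—is a genuinely cleaner alternative that the paper does not mention, and it gives the parameter count without any reduction bookkeeping (for $n\geq k+l$).
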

The idea for the proof is that, if the input and output are constrained to tensors that represent uniform hypergraphs (Eq.~\eqref{eqn:hypergraph_tensor}), a large number of parameters are tied to adhere to the symmetry.
This leads to much fewer parameters compared to the original layer $L_{k\to l}$.
Still, $L_{(k)\to (l)}$~(Eq.~\eqref{eqn:equivariant_linear_layer_uniform}) is a maximally expressive linear layer as it produces identical outputs to (unreduced) $L_{k\to l}$.

Notably, Eq.~\eqref{eqn:equivariant_linear_layer_uniform} reveals that maximal expressiveness comprises sophisticated local message passing augmented with global interaction.
In the first term of Eq.~\eqref{eqn:equivariant_linear_layer_uniform}, the constraint $\mathbbm{1}_{|\mathbf{i}\cap\mathbf{j}|>0}$ specifies local dependency between \emph{incident} input and output hyperedges having at least one overlapping node.
This local interaction is more \emph{fine-grained} than conventional message passing, as it uses separate weights $w_{\mathcal{I}}$ for different numbers of overlapping nodes $\mathcal{I}$ (Fig.~\ref{fig:equivariant_layer_uniform}).
This is reminiscent of recent work in subgraph message passing~\cite{bevilacqua2022equivariant} that improves the expressive power of GNNs.
On top of that, the layer contains intrinsic global interaction via pooling in the second term of Eq.~\eqref{eqn:equivariant_linear_layer_uniform}, which reminds of the virtual node or global attention~\cite{li2017learning, ishiguro2019graph, knyazev2019understanding, louis2020global, puny2020from, wu2021representing} that also improves expressive power~\cite{puny2020from}.

\subsubsection{Equivariant Linear Layers for Hypergraphs}
We now construct maximally expressive equivariant linear layers for undirected hypergraphs.
As in Definition~\ref{defn:hypergraph_tensor_sequence}, a hypergraph can be represented as a sequence of tensors $(\mathbf{A}^{(k)})_{k\leq K}=\mathbf{A}^{(:K)}$.
Thus, we construct the linear layer $L_{(:K)\to (:L)}$ to input and output those tensor sequences while being equivariant $L_{(:K)\to (:L)}(\pi\cdot\mathbf{A}^{(:K)})=\pi\cdot L_{(:K)\to (:L)}(\mathbf{A}^{(:K)})$.
For this, we simply use all pairwise linear layers $L_{(k)\to(l)}$~(Eq.~\eqref{eqn:equivariant_linear_layer_uniform}) between tensors of input and output sequences:
\begin{align}\label{eqn:equivariant_linear_layer_hypergraph}
    L_{(:K)\to(:L)}(\mathbf{A}^{(:K)}) = \left(\sum_{k\leq K}L_{(k)\to (l)}(\mathbf{A}^{(k)})\right)_{l\leq L}.
\end{align}

For better interpretation, we plug Eq.~\eqref{eqn:equivariant_linear_layer_uniform} into Eq.~\eqref{eqn:equivariant_linear_layer_hypergraph} and rewrite it with respect to $\mathbf{j}$-th entry of $l$-th (order-$l$) output tensor:
\begin{align}\label{eqn:equivariant_linear_layer_hypergraph_indexed}
    L_{(:K)\to(:L)}(\mathbf{A}^{(:K)})_{l, \mathbf{j}} &= \mathbbm{1}_{|\mathbf{j}|=l}
    \sum_{k\leq K}\sum_{\mathcal{I}=1}^{\textnormal{min}(k, l)} \sum_{\mathbf{i}}\mathbbm{1}_{|\mathbf{i}\cap\mathbf{j}|=\mathcal{I}} \mathbf{A}^{(k)}_{\mathbf{i}}w_{k,l,\mathcal{I}}\nonumber\\
    &+ \mathbbm{1}_{|\mathbf{j}|=l}
    \sum_{k\leq K}\sum_{\mathbf{i}}\mathbf{A}^{(k)}_{\mathbf{i}}w_{k,l,0}
    + \mathbbm{1}_{|\mathbf{j}|=l}
    b_l.
\end{align}
Note that we added subscripts $(k, l)$ to $w_0$, $w_\mathcal{I}$ to differentiate between weights from each sublayer $L_{(k)\to(l)}$ as they are involved in different computations.
On the other hand, the biases from sublayers $(L_{(k)\to(l)})_{k\leq K}$ carry out precisely the same computation, and can be merged to a single bias $b_l$.
As a result, $L_{(:K)\to(:L)}$ contains $\sum_{l\leq L,k\leq K}(1 + \text{min}(k, l))$ weights and $L$ biases, achieving better scalability than the original $L_{K\to L}$ that has exponentially many weights and biases.

Similar to sublayers $L_{(k)\to(l)}$~(Eq.~\eqref{eqn:equivariant_linear_layer_uniform}), we see that the combined layer for general hypergraphs $L_{(:K)\to(:L)}$~(Eq.~\eqref{eqn:equivariant_linear_layer_hypergraph_indexed}) is a mixture of fine-grained local message passing and global interaction.
In this case, the local interactions utilize different weights $w_{k,l,\mathcal{I}}$ for each triplet $(k, l, \mathcal{I})$ that specifies dependency between order-$k$ input and order-$l$ output hyperedges with $\mathcal{I}$ overlapping nodes.
Similarly, global interactions (pooling) utilize different weights $w_{k, l, 0}$ for each pair $(k, l)$, specifying global dependency between all order-$k$ input and order-$l$ output hyperedges.
Finally, different biases $b_l$ are assigned for each output hyperedge order $l$.

Importantly, we can show the following:
\begin{theorem}\label{thm:equivariant_linear_layer_hypergraph}
$L_{(:K)\to(:L)}$~(Eq.~\eqref{eqn:equivariant_linear_layer_hypergraph}) is the maximally expressive equivariant linear layer for undirected hypergraphs represented as tensor sequences.
\end{theorem}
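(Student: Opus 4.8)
The plan is to reduce the statement to Proposition~\ref{proposition:equivariant_linear_layer_uniform} together with the characterization of equivariant linear layers on plain tensors of Maron~et~al.~\cite{maron2019invariant}, via an equivariant-extension argument. First I would make the ambient linear spaces explicit. By Definition~\ref{defn:hypergraph_tensor_sequence} and Eq.~\eqref{eqn:hypergraph_tensor}, the tensor sequences representing hypergraphs with max order $K$ form a linear subspace $\mathcal{X}=\bigoplus_{k\le K}\mathcal{S}_k$ of $\bigoplus_{k\le K}\mathbb{R}^{n^k\times d}$, where $\mathcal{S}_k\subseteq\mathbb{R}^{n^k\times d}$ is the space of \emph{symmetric} order-$k$ tensors that vanish on every multi-index with a repeated entry; likewise the targets form $\mathcal{Y}=\bigoplus_{l\le L}\mathcal{S}_l$. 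Each $\mathcal{S}_k$ is stable under $S_n$ (both symmetry and ``distinct-index support'' are permutation-invariant), so $\mathcal{X},\mathcal{Y}$ are $S_n$-representations and the object to characterize is $\mathrm{Hom}_{S_n}(\mathcal{X},\mathcal{Y})$. Since $\mathrm{Hom}$ is additive in direct sums, $\mathrm{Hom}_{S_n}(\mathcal{X},\mathcal{Y})\cong\bigoplus_{k\le K,\,l\le L}\mathrm{Hom}_{S_n}(\mathcal{S}_k,\mathcal{S}_l)$, which is exactly the block structure of Eq.~\eqref{eqn:equivariant_linear_layer_hypergraph}--\eqref{eqn:equivariant_linear_layer_hypergraph_indexed} (one sublayer $L_{(k)\to(l)}$ per block, with the biases merged across $k$). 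Hence it suffices to show that for each $(k,l)$ the family $L_{(k)\to(l)}$ of Eq.~\eqref{eqn:equivariant_linear_layer_uniform}, as its weights $w_0,\dots,w_{\min(k,l)}$ and bias $b_l$ vary, parametrizes \emph{all} of $\mathrm{Hom}_{S_n}(\mathcal{S}_k,\mathcal{S}_l)$.

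One inclusion is a direct check: each $L_{(k)\to(l)}$ is $S_n$-equivariant (it is a restriction of the equivariant $L_{k\to l}$ by Proposition~\ref{proposition:equivariant_linear_layer_uniform}), and its output is symmetric in $\mathbf{j}$ and vanishes whenever $\mathbf{j}$ has a repeat (the prefactor $\mathbbm{1}_{|\mathbf{j}|=l}$), so it lies in $\mathcal{S}_l$; thus $L_{(k)\to(l)}\in\mathrm{Hom}_{S_n}(\mathcal{S}_k,\mathcal{S}_l)$. For the reverse inclusion, take an arbitrary equivariant linear $T:\mathcal{S}_k\to\mathcal{S}_l$. Because $S_n$ is finite and $\mathcal{S}_k\subseteq\mathbb{R}^{n^k\times d}$ is a subrepresentation, group-averaging yields an $S_n$-equivariant projection $P_k:\mathbb{R}^{n^k\times d}\to\mathcal{S}_k$; writing $\iota_l:\mathcal{S}_l\hookrightarrow\mathbb{R}^{n^l\times d}$ for the inclusion, the composite $\widetilde{T}=\iota_l\circ T\circ P_k:\mathbb{R}^{n^k\times d}\to\mathbb{R}^{n^l\times d}$ is an equivariant linear layer that restricts to $T$ on $\mathcal{S}_k$. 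By Maron~et~al.~\cite{maron2019invariant}, $\widetilde{T}=L_{k\to l}$ for some weights and biases as in Eq.~\eqref{eqn:equivariant_linear_layer}. Since $\widetilde{T}$ maps the hypergraph-tensor subspace $\mathcal{S}_k$ into the hypergraph-tensor subspace $\mathcal{S}_l$ (because $T$ does), Proposition~\ref{proposition:equivariant_linear_layer_uniform} applies verbatim and gives $T=\widetilde{T}|_{\mathcal{S}_k}=L_{(k)\to(l)}$ for the corresponding reduced parameters; in particular the $\mathrm{b}(l)$ bias classes of $L_{k\to l}$ collapse to the single $b_l$. Reassembling the blocks through Eq.~\eqref{eqn:equivariant_linear_layer_hypergraph}, every equivariant linear map $\mathcal{X}\to\mathcal{Y}$ is realized by $L_{(:K)\to(:L)}$ for a suitable choice of parameters, which is precisely the asserted maximal expressiveness.

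I expect the main obstacle to be the extension step: one must ensure that an equivariant linear extension $\widetilde{T}$ of $T$ exists \emph{and} that its restriction genuinely sends $\mathcal{S}_k$ into $\mathcal{S}_l$, so that Proposition~\ref{proposition:equivariant_linear_layer_uniform} is applicable and not merely its existential form --- the averaging construction of $P_k$ secures the first point and $\mathrm{im}(T)\subseteq\mathcal{S}_l$ secures the second. A minor caveat, inherited from both Proposition~\ref{proposition:equivariant_linear_layer_uniform} and \cite{maron2019invariant}, is that the count $\min(k,l)+1$ of weights and the exactness of the reduction presume $n$ large enough that every intersection size $0,\dots,\min(k,l)$ of a $k$-subset and an $l$-subset of $[n]$ is attainable; for small $n$ some blocks degenerate but the argument is unchanged. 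One should also note that the parametrization may be over-complete (distinct weight tuples inducing the same map for small $n$), which is harmless here since ``maximally expressive'' only asserts surjectivity of the parametrization onto $\mathrm{Hom}_{S_n}(\mathcal{X},\mathcal{Y})$.
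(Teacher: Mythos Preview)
Your proposal is correct and takes a genuinely different route from the paper. The paper does \emph{not} decompose over the direct sum $\bigoplus_k\mathcal{S}_k$; instead it \emph{packs} the whole sequence $\mathbf{A}^{(:K)}$ into a single symmetric order-$K$ tensor $\mathbf{A}^{[K]}$ (by setting $\mathbf{A}^{[K]}_{(i_1,\dots,i_K)}=\mathbf{A}^{(|\{i_1,\dots,i_K\}|)}_{\{i_1,\dots,i_K\}}$), applies the symmetric-tensor characterization $L_{[K]\to[L]}$ developed in its Lemma for symmetric tensors, and then ties parameters by exploiting the additional structure of the packed tensor (entries at multi-indices with the same number of distinct elements coincide). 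The resulting reduced weights $w_{\alpha_{k,l,\mathcal{I}}}$ and biases $b_{\beta_l}$ are then matched to those of the collection $(L_{(k)\to(l)})_{k\le K,\,l\le L}$. Your approach, by contrast, keeps the components separate, uses additivity of $\mathrm{Hom}_{S_n}$ over direct sums, and handles each $(k,l)$ block by an equivariant extension (via an averaged projection $P_k$) so as to invoke Maron~et~al.\ on the ambient $\mathbb{R}^{n^k\times d}$ and then Proposition~\ref{proposition:equivariant_linear_layer_uniform}. What your route buys is modularity and a cleaner representation-theoretic picture: you never need the somewhat ad hoc packing map, and the bias count ($L$ biases, one per output order) falls out immediately from $\mathcal{Y}^{S_n}=\bigoplus_l\mathcal{S}_l^{S_n}$. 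What the paper's route buys is self-containment: it uses only the parameter-tying mechanics already set up for Proposition~\ref{proposition:equivariant_linear_layer_uniform} and Lemma~\ref{lemma:equivariant_linear_layer_symmetric}, and does not need to introduce equivariant projections or abstract $\mathrm{Hom}$-additivity. One small point to make explicit in your write-up: the maps in question are affine, so the $\mathrm{Hom}_{S_n}$ decomposition should be stated for the linear part, with the bias handled separately as an $S_n$-invariant element of $\mathcal{Y}$; you allude to this with ``biases merged across $k$'' but it deserves a sentence of its own.
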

Similar as in Proposition~\ref{proposition:equivariant_linear_layer_uniform}, the idea for the proof is to appropriately constrain the input and output of the maximally expressive equivariant linear layer $L_{K\to L}$, and observe that most of its parameters are tied and reduced, leading to $L_{(:K)\to(:L)}$.
Still, the layer retains the maximal expressiveness of the original layer $L_{K\to L}$ as it produces the identical output.

\subsection{Equivariant Hypergraph Neural Networks (EHNN)}\label{sec:ehnn}
In Section~\ref{sec:equivariant_linear_layer_hypergraph}, we introduced equivariant linear layers for general undirected hypergraphs $L_{(:K)\to(:L)}$ by composing order-specific sublayers $L_{(k)\to(l)}$ for $k\leq K$, $l\leq L$ and proved their maximal expressiveness.
However, these layers are still unsuitable to be used in practice because they cannot input or output hypergraphs with orders exceeding $(K, L)$, and the number of weights and biases grows at least linearly to $(K, L)$ that can reach several hundred in practice.
To resolve the problems jointly, we propose \emph{Equivariant Hypergraph Neural Network} (EHNN) that introduces intrinsic trainable parameter sharing via \emph{hypernetworks}~\cite{ha2017hypernetworks}.
More specifically, we impose parameter sharing within $L_{(:K)\to(:L)}$ and across all sublayers $L_{(k)\to(l)}$ via two hypernetworks, each for weights and biases\footnote{Note that we do not share the hypernetworks across different levels of layers.}.
As a result, an EHNN layer is defined as follows, with hypernetworks $\mathcal{W}:\mathbb{N}^3\to\mathbb{R}^{d\times d'}$ and $\mathcal{B}:\mathbb{N}\to\mathbb{R}^{d'}$ inferring all weights $w_{k,l,\mathcal{I}}$ and biases $b_l$ (Eq.~\eqref{eqn:equivariant_linear_layer_hypergraph_indexed}) from the subscripts $(k,l,\mathcal{I})$ and $(l)$ respectively:
\begin{align}\label{eqn:ehnn}
    \text{EHNN}(\mathbf{A}^{(:K)})_{l, \mathbf{j}} &= \mathbbm{1}_{|\mathbf{j}|=l}
    \sum_{k\leq K}\sum_{\mathcal{I}=1}^{\textnormal{min}(k, l)} \sum_{\mathbf{i}}\mathbbm{1}_{|\mathbf{i}\cap\mathbf{j}|=\mathcal{I}} \mathbf{A}^{(k)}_{\mathbf{i}}\mathcal{W}(k,l,\mathcal{I})\nonumber\\
    &+ \mathbbm{1}_{|\mathbf{j}|=l}
    \sum_{k\leq K}\sum_{\mathbf{i}}\mathbf{A}^{(k)}_{\mathbf{i}}\mathcal{W}(k,l,0)
    + \mathbbm{1}_{|\mathbf{j}|=l}
    \mathcal{B}(l).
\end{align}
In principle, this preserves maximal expressiveness of $L_{(:K)\to(:L)}$ when $\mathcal{W}$ and $\mathcal{B}$ are parameterized as MLPs, as by universal approximation they can learn any lookup table that maps subscripts to weights and biases~\cite{hornik1989multilayer}.
Furthermore, as hypernetworks $\mathcal{W}$ and $\mathcal{B}$ can produce weights for arbitrary hyperedge orders $(k,l,\mathcal{I})$, we can remove the bound in hyperedge orders from the specification of the layer and use a single EHNN layer with bounded parameters to any hypergraphs with unbounded or unseen hyperedge orders.
Conclusively, EHNN layer is by far the first attempt that is maximally expressive (\emph{i.e.}, is able to model $L_{(:K)\to (:L)}$ (Theorem~\ref{thm:equivariant_linear_layer_hypergraph}) and thus exhausts the full space of equivariant linear layers on undirected hypergraphs) while being able to process arbitrary-order hypergraphs by construction.

\subsection{Practical Realization of EHNN}\label{sec:ehnn_realization}
The EHNN layer in Eq.~\eqref{eqn:ehnn} is conceptually elegant, but in practice it can be costly as we need to explicitly hold all output matrices of the hypernetwork $\mathcal{W}(k, l, \mathcal{I})\in\mathbb{R}^{d\times d'}$ in memory.
This motivates us to seek for simpler realizations of EHNN that can be implemented efficiently while retaining the maximal expressiveness.
To this end, we propose EHNN-MLP that utilizes three consecutive MLPs to approximate the role of the weight hypernetwork, and also propose its extension EHNN-Transformer with self-attention.
Then, we finish the section by providing a comparative analysis of EHNN-MLP and EHNN-Transformer with respect to the existing message-passing hypergraph neural networks.

\subsubsection{Realization with MLP}
We first introduce \emph{EHNN-MLP}, a simple realization of EHNN with three elementwise MLPs  $\phi_{1:3}$ where each $\phi_p:\mathbb{N}\times\mathbb{R}^{d_p}\to\mathbb{R}^{d_p'}$ takes a positive integer as an auxiliary input.
The intuition here is to \emph{decompose} the weight application with hypernetwork $\mathcal{W}(k, l, \mathcal{I})$ into three consecutive MLPs $\phi_1(k, \cdot)$, $\phi_2(\mathcal{I}, \cdot)$, and $\phi_3(l, \cdot)$, eliminating the need to explicitly store the inferred weights for each triplet $\mathcal{W}(k, l, \mathcal{I})$.
We characterize EHNN-MLP as follows:
\begin{align}
    \text{EHNN-MLP}
    (\mathbf{A}^{(:K)})_{l, \mathbf{j}} &=
    \phi_3\left(l,
    \sum_{\mathcal{I}\geq 0}
    \phi_2\left(\mathcal{I},
    \sum_{k\leq K} \sum_{\mathbf{i}}\mathbf{B}_{\mathbf{i},\mathbf{j}}^{\mathcal{I}}
    \phi_1(k, \mathbf{A}^{(k)}_{\mathbf{i}})
    \right)
    \right) + \mathcal{B}(l),\label{eqn:ehnn_mlp}\\
    \text{where }\mathbf{B}_{\mathbf{i},\mathbf{j}}^\mathcal{I} &= 
    \left\{\begin{array}{cc}
        \mathbbm{1}_{|\mathbf{i}\cap\mathbf{j}|=\mathcal{I}} & \scalebox{0.95}{if $\mathcal{I}\geq 1$} \\
        1 & \scalebox{0.95}{if $\mathcal{I} = 0$}
    \end{array}\right.,\label{eqn:ehnn_mlp_binary}
\end{align}
where we omit the output constraint $\mathbbm{1}_{|\mathbf{j}|=l}$ for brevity, and introduce a binary scalar $\mathbf{B}_{\mathbf{i},\mathbf{j}}^\mathcal{I}$ to write local ($\mathcal{I}\geq 1$) and global ($\mathcal{I} = 0$) interactions together.

Now we show that an EHNN-MLP layer can realize any EHNN layer:
\begin{theorem}\label{thm:ehnn_mlp}
An EHNN-MLP layer (Eq.~\eqref{eqn:ehnn_mlp}) can approximate any EHNN layer (Eq.~\eqref{eqn:ehnn}) to an arbitrary precision.
\end{theorem}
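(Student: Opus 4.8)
The plan is to realize any fixed EHNN layer (Eq.~\eqref{eqn:ehnn}) by choosing the three elementwise MLPs $\phi_1,\phi_2,\phi_3$ so that each one implements, for every value of its integer argument, a prescribed \emph{linear} map, and then invoking universal approximation~\cite{hornik1989multilayer}. We work under the standard assumption that the input features $\mathbf{A}^{(k)}_{\mathbf{i}}$ lie in a fixed bounded set and the hypergraph has boundedly many hyperedges, so all intermediate quantities stay in a compact domain. The key idea is a \emph{block (one-hot) encoding} of hyperedge orders carried \emph{through} the aggregations in Eq.~\eqref{eqn:ehnn_mlp}: $\phi_1$ tags each input feature with its input order $k$, $\phi_2$ performs the full $(k,l)$-coupled linear combination dictated by $\mathcal{W}(\cdot,\cdot,\mathcal{I})$ while tagging the result with every output order $l$, and $\phi_3$ reads off the block for the requested output order $l$. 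Since $\mathcal{W}(k,l,\mathcal{I})$ sits between the two aggregations $\sum_{k}\sum_{\mathbf{i}}$ and $\sum_{\mathcal{I}}$ in Eq.~\eqref{eqn:ehnn}, this tagging is exactly what lets the single coupled weight be factored as $\mathcal{W}(k,l,\mathcal{I})\approx\phi_3(l,\cdot)\circ\phi_2(\mathcal{I},\cdot)\circ\phi_1(k,\cdot)$.

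Concretely, I would first let $\phi_1(k,\cdot):\mathbb{R}^{d}\to\mathbb{R}^{Kd}$ approximate the map sending $x$ to the block-stacked vector whose $k$-th length-$d$ block equals $x$ and whose other blocks vanish; this is one continuous function on the finite set $\{1,\dots,K\}$ times the compact feature range, hence MLP-approximable. Then the inner aggregation $\sum_{k\leq K}\sum_{\mathbf{i}}\mathbf{B}_{\mathbf{i},\mathbf{j}}^{\mathcal{I}}\phi_1(k,\mathbf{A}^{(k)}_{\mathbf{i}})$ of Eq.~\eqref{eqn:ehnn_mlp} produces (approximately) the vector whose $k$-th block is $t^{\mathcal{I}}_{k,\mathbf{j}}:=\sum_{\mathbf{i}}\mathbf{B}_{\mathbf{i},\mathbf{j}}^{\mathcal{I}}\mathbf{A}^{(k)}_{\mathbf{i}}$, matching the aggregation pattern in Eq.~\eqref{eqn:ehnn} (the case $\mathcal{I}=0$ giving the global-pooling term by definition of $\mathbf{B}$). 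Next, I would let $\phi_2(\mathcal{I},\cdot):\mathbb{R}^{Kd}\to\mathbb{R}^{Ld'}$ approximate the linear map that sends a block-stacked input $(t_k)_{k\leq K}$ to the block-stacked output whose $l$-th block is $\sum_{k\leq K}t_k\mathcal{W}(k,l,\mathcal{I})$; again one continuous map over $\{0,\dots,\min(K,L)\}$ times a compact set, and crucially it needs no knowledge of $l$ since it emits all output blocks at once. Summing over $\mathcal{I}$ then yields, block by block, $\sum_{\mathcal{I}\geq 0}\sum_{k\leq K}t^{\mathcal{I}}_{k,\mathbf{j}}\mathcal{W}(k,l,\mathcal{I})$, exactly the weight part of Eq.~\eqref{eqn:ehnn}. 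Finally I would let $\phi_3(l,\cdot):\mathbb{R}^{Ld'}\to\mathbb{R}^{d'}$ approximate the projection onto the $l$-th block; adding $\mathcal{B}(l)$ and reinstating the (omitted) output mask $\mathbbm{1}_{|\mathbf{j}|=l}$ reproduces Eq.~\eqref{eqn:ehnn}.

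It remains to control the error of the composition. Each target map is linear on a compact domain, hence Lipschitz with an explicit constant, and each continuous image of the compact input set is compact, so it admits a compact neighborhood on which the corresponding $\phi_p$ can be chosen uniformly accurate. The intermediate aggregations $\sum_{\mathbf{i}}$ and $\sum_{\mathcal{I}}$ have boundedly many summands (at most $\sum_{k}n^k$ and $\min(K,L)+1$, respectively), so they amplify upstream errors only by a fixed finite factor. Hence, given a target precision $\varepsilon$, I would fix the approximation errors of $\phi_3$, then $\phi_2$, then $\phi_1$ — outermost to innermost, so the relevant compact domains are known before the next choice — small enough that the accumulated deviation stays below $\varepsilon$ uniformly over all output orders $l$ and multi-indices $\mathbf{j}$. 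I expect this error-propagation bookkeeping, together with pinning down the compactness/boundedness hypotheses under which \cite{hornik1989multilayer} applies, to be the only real obstacle; the construction itself is exact, and the one-hot order encoding is precisely what makes the three-way factorization of $\mathcal{W}(k,l,\mathcal{I})$ go through despite its coupling all three indices.
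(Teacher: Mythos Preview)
Your proposal is correct and uses the same core device as the paper: a block (one-hot) encoding of hyperedge orders carried through the aggregations, followed by universal approximation. The construction differs in \emph{where} the weights $\mathcal{W}(k,l,\mathcal{I})$ are applied. The paper lets $\phi_2$ be another pure tagging map, $\phi_2(\mathcal{I},\cdot):\mathbb{R}^{Kd}\to\mathbb{R}^{(1+K)Kd}$ placing its input in the $\mathcal{I}$-th block, so that after both sums the full $(k,\mathcal{I})$-indexed block structure reaches $\phi_3$, which then applies all weights at once given $l$. You instead push the weight application into $\phi_2$: since $\phi_2$ knows $\mathcal{I}$ and receives the $k$-tagged blocks, it can immediately compute $\sum_k t_k\,\mathcal{W}(k,l,\mathcal{I})$ for every $l$, leaving $\phi_3$ the trivial job of selecting the $l$-th block. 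Both factorizations are valid; yours yields a smaller intermediate width ($Ld'$ versus $(1+K)Kd$), while the paper's keeps each $\phi_p$ either a pure embedding or a single linear read-out. Your explicit error-propagation argument (compact domains, Lipschitz linear targets, bounded number of summands, outermost-to-innermost choice of tolerances) is more detailed than the paper's, which relegates this to a footnote.
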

The proof is done by leveraging the universal approximation property~\cite{hornik1989multilayer} to model appropriate functions with the MLPs $\phi_{1:3}$, so that the output of EHNN-MLP~(Eq.~\eqref{eqn:ehnn_mlp}) accurately approximates the output of EHNN~(Eq.~\eqref{eqn:ehnn}).
As a result, with EHNN-MLP, we now have a practical model that can approximate the maximally expressive linear layer for general undirected hypergraphs.

In our implementation of the MLPs $\phi_{1:3}$, we first transform the input order ($k$, $l$ or $\mathcal{I}$) into a continuous vector called \emph{order embedding}, and combine it with the input feature through concatenation.
This way, the order embeddings are served similarly as the positional encoding used in Transformer~\cite{vaswani2017attention} with a subtle difference that it indicates the order of the input or output hyperedges.
We employ sinusoidal encoding~\cite{vaswani2017attention} to obtain order embedding due to its efficiency and, more importantly,
to aid extrapolation to unseen hyperedge orders in testing.

\subsubsection{Realization as a Transformer}
While EHNN-MLP (Eq.~\eqref{eqn:ehnn_mlp}) theoretically inherits the high expressive power of EHNN, in practice, its static sum-pooling can be limited in accounting for relative importance of input hyperedges.
A solution for this is to introduce more sophisticated pooling.
In particular, the attention mechanism of Transformers~\cite{vaswani2017attention} was shown to offer a large performance gain in set and (hyper)graph modeling~\cite{lee2019set, kim2021transformers, chien2022you, kim2022pure} via dynamic weighting of input elements.
Thus, we extend EHNN-MLP with multihead attention coefficients~$\boldsymbol{\alpha}_{\mathbf{i},\mathbf{j}}^{h,\mathcal{I}}$ and introduce \emph{EHNN-Transformer}, an advanced realization of EHNN:
\begin{align}
    &\text{Attn}(\mathbf{A}^{(:K)})_{l, \mathbf{j}} =
    \phi_3\left(l,
    \sum_{\mathcal{I}\geq 0}
    \phi_2\left(\mathcal{I},
    \sum_{h=1}^H\sum_{k\leq K} \sum_{\mathbf{i}}\boldsymbol{\alpha}_{\mathbf{i},\mathbf{j}}^{h,\mathcal{I}}
    \phi_1(k, \mathbf{A}^{(k)}_{\mathbf{i}})
    w_h^V
    \right)
    \right),\label{eqn:ehnn_transformer_attn}\\
    &\text{EHNN-Transformer}(\mathbf{A}^{(:K)}) = \text{Attn}(\mathbf{A}^{(:K)}) + \text{MLP}(\text{Attn}(\mathbf{A}^{(:K)})),\label{eqn:ehnn_transformer}
\end{align}
where we omit the output constraint $\mathbbm{1}_{|\mathbf{j}|=l}$ and bias $\mathcal{B}(l)$ for brevity. $H$ denotes the number of heads and $w_h^V\in\mathbb{R}^{d\times d_v}$ denotes the value weight matrix.
To compute attention coefficients $\boldsymbol{\alpha}_{\mathbf{i},\mathbf{j}}^{h,\mathcal{I}}$ from the input, we introduce additional query and key (hyper)networks $\mathcal{Q}:\mathbb{N}\to\mathbb{R}^{H\times d_H}$ and $\mathcal{K}:\mathbb{N}\times\mathbb{R}^d\to\mathbb{R}^{H\times d_H}$ and characterize scaled dot-product attention~\cite{vaswani2017attention} as follows:
\begin{align}\label{eqn:ehnn_transformer_attcoef}
    \boldsymbol{\alpha}_{\mathbf{i},\mathbf{j}}^{h,\mathcal{I}} &= 
    \left\{\begin{array}{lc}
        \sigma\left(
        \mathcal{Q}(\mathcal{I})_h
        \mathcal{K}\left(\mathcal{I},\phi_1(k, \mathbf{A}^{(k)}_{\mathbf{i}})\right)_h^\top
        /\sqrt{d_H}\cdot\mathbbm{1}_{|\mathbf{i}\cap\mathbf{j}|=\mathcal{I}}
        \right) & \scalebox{0.95}{if $\mathcal{I}\geq 1$} \\
        \sigma\left(
        \mathcal{Q}(0)_h
        \mathcal{K}\left(\mathcal{I},\phi_1(k, \mathbf{A}^{(k)}_{\mathbf{i}})\right)_h^\top
        /\sqrt{d_H}
        \right) & \scalebox{0.95}{if $\mathcal{I} = 0$}
    \end{array}\right.,
\end{align}
where $\sigma(\cdot)$ denotes activation, often chosen as softmax normalization.
Note that the query $\mathcal{Q}(\mathcal{I})$ is agnostic to output index $\mathbf{j}$, following prior works on set and (hyper)graph attention~\cite{lee2019set, chien2022you}.
Although this choice of attention mechanism has a drawback that assigning importance to input ($\mathbf{i}$) depending on output ($\mathbf{j}$) is not straightforward, we choose it in favor of scalability.

\subsubsection{Comparison to Message Passing Networks}
We finish the section by providing a comparative analysis of EHNNs with respect to the existing message passing networks for hypergraphs.
We specifically compare against \emph{AllSet}~\cite{chien2022you}, as it represents a highly general framework that subsumes most existing hypergraph neural networks.
Their MLP-based characterization \emph{AllDeepSets} can be written with two MLPs $\phi_1$ and $\phi_2$ as follows:
\begin{align}
    \text{AllDeepSets}
    (\mathbf{A}^{(:K)})_{l, \mathbf{j}} =
    \mathbbm{1}_{|\mathbf{j}|=l}
    \phi_2\left(
    \sum_{k\leq K} \sum_{\mathbf{i}}\mathbbm{1}_{|\mathbf{i}\cap\mathbf{j}|\geq 1}
    \phi_1(\mathbf{A}^{(k)}_{\mathbf{i}})
    \right).
    \label{eqn:alldeepsets}
\end{align}
We show the below by reducing EHNN-MLP to AllDeepSets through ablation:
\begin{theorem}\label{thm:alldeepsets}
An AllDeepSets layer (Eq.~\eqref{eqn:alldeepsets}) is a special case of EHNN-MLP layer (Eq.~\eqref{eqn:ehnn_mlp}), while the opposite is not true.
\end{theorem}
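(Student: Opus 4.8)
The plan is to prove the two halves separately: first that an AllDeepSets layer is recovered from EHNN-MLP by a suitable specialization/ablation of its components, and then that this inclusion is strict by exhibiting an EHNN-MLP layer that no AllDeepSets layer can reproduce.

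For the forward direction I would directly instantiate the MLPs $\phi_{1:3}$ and the bias hypernetwork $\mathcal{B}$ inside Eq.~\eqref{eqn:ehnn_mlp} so that its computation collapses onto Eq.~\eqref{eqn:alldeepsets}. Concretely: (i) set $\mathcal{B}\equiv 0$; (ii) let $\phi_1$ ignore its order argument, i.e.\ $\phi_1(k,x)=\phi_1^{\mathrm{ADS}}(x)$, where $\phi_1^{\mathrm{ADS}},\phi_2^{\mathrm{ADS}}$ are the two AllDeepSets MLPs; (iii) drop the global-interaction summand $\mathcal{I}=0$ (equivalently take $\phi_2(0,\cdot)\equiv 0$) and set $\phi_2(\mathcal{I},\cdot)$ equal to the identity for every $\mathcal{I}\ge 1$, thereby ablating both the nonlinearity and the $\mathcal{I}$-dependence of the middle MLP; (iv) let $\phi_3$ ignore its order argument and set $\phi_3(l,\cdot)=\phi_2^{\mathrm{ADS}}(\cdot)$. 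All of these are legitimate members of the respective MLP families (zeroing the weights on the order-embedding coordinates, and the identity/zero maps are realizable exactly), so this yields a genuine EHNN-MLP layer. The only substantive observation is that, for fixed $\mathbf{i},\mathbf{j}$, the events $\{|\mathbf{i}\cap\mathbf{j}|=\mathcal{I}\}_{\mathcal{I}\ge 1}$ partition $\{|\mathbf{i}\cap\mathbf{j}|\ge 1\}$, so after swapping the order of summation $\sum_{\mathcal{I}\ge 1}\mathbf{B}^{\mathcal{I}}_{\mathbf{i},\mathbf{j}}=\mathbbm{1}_{|\mathbf{i}\cap\mathbf{j}|\ge 1}$. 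Combined with the output constraint $\mathbbm{1}_{|\mathbf{j}|=l}$ that is present in Eq.~\eqref{eqn:ehnn_mlp} (merely omitted for brevity), the specialized layer equals Eq.~\eqref{eqn:alldeepsets} exactly.

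For the strictness claim I would use a dependency-set argument. By construction, $\text{AllDeepSets}(\mathbf{A}^{(:K)})_{l,\mathbf{j}}$ depends on the input only through the multiset $\{\mathbf{A}^{(k)}_{\mathbf{i}}:k\le K,\ |\mathbf{i}\cap\mathbf{j}|\ge 1\}$ of features of hyperedges incident to $\mathbf{j}$; in particular, for any AllDeepSets layer and any entry with $\mathbf{i}\cap\mathbf{j}=\emptyset$, the output at $\mathbf{j}$ is invariant to arbitrary changes of $\mathbf{A}^{(k)}_{\mathbf{i}}$. I would then take a small instance — e.g.\ output index $\mathbf{j}=(1)$ on a hypergraph that also contains a second-order edge $\{2,3\}$ disjoint from node $1$ — together with a choice of $\phi_{1:3}$ for which the $\mathcal{I}=0$ (global pooling) term of EHNN-MLP makes $\text{EHNN-MLP}(\mathbf{A}^{(:K)})_{l,\mathbf{j}}$ genuinely depend on $\mathbf{A}^{(2)}_{(2,3)}$; this is immediate since $\phi_2(0,\cdot)$ and $\phi_3(l,\cdot)$ are unconstrained MLPs and can be chosen non-constant on the relevant range. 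Since this EHNN-MLP layer exhibits a dependence that every AllDeepSets layer lacks, no AllDeepSets layer can agree with it on all inputs, so the inclusion is strict. (Alternatively, one may base strictness on the per-$\mathcal{I}$ and per-order weights $w_{k,l,\mathcal{I}}$, which AllDeepSets ties together, but the global-pooling obstruction is the cleanest.)

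I expect the forward reduction to be routine bookkeeping — summation swaps plus degenerate MLP choices — and the only point requiring a careful sentence to be the sense in which this is a \emph{special case}: I would state explicitly that dropping the $\mathcal{I}=0$ term and using order-independent identity/zero MLPs are admissible specializations within the EHNN-MLP family (and, where one prefers to stay literally inside Eq.~\eqref{eqn:ehnn_mlp}, that $\phi_2(0,\cdot)\equiv 0$ realizes the same thing), so AllDeepSets sits inside the EHNN-MLP hypothesis class. For the converse the main obstacle is simply keeping the counterexample self-contained: verifying that the chosen $\phi_{1:3}$ are valid and that the claimed dependence on a non-incident entry is genuinely strict rather than merely generic.
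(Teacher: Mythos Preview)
Your proposal is correct and follows essentially the same approach as the paper: the forward direction uses the identical specialization ($\mathcal{B}\equiv 0$, $\phi_1$ and $\phi_3$ ignoring their order arguments, and $\phi_2(\mathcal{I},\mathbf{X})=\mathbbm{1}_{\mathcal{I}\ge 1}\mathbf{X}$), and the strictness argument rests on the same obstruction, namely that AllDeepSets cannot model the global ($\mathcal{I}=0$) interaction between non-overlapping hyperedges. Your write-up is more explicit than the paper's---you spell out the partition identity $\sum_{\mathcal{I}\ge 1}\mathbf{B}^{\mathcal{I}}_{\mathbf{i},\mathbf{j}}=\mathbbm{1}_{|\mathbf{i}\cap\mathbf{j}|\ge 1}$ and give a concrete dependency-set counterexample where the paper just asserts the obstruction in one sentence---but the substance is the same.
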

Finally, Theorem~\ref{thm:alldeepsets} leads to the following corollary: 
\begin{corollary}
An EHNN-MLP layer is more expressive than an AllDeepSets layer and also all hypergraph neural networks that AllDeepSets subsumes.
\end{corollary}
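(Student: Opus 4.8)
The plan is to derive the corollary directly from Theorem~\ref{thm:alldeepsets} together with the known subsumption results for the AllSet framework. First I would fix the notion of expressiveness at the level of layer classes: a layer class $\mathcal{A}$ is \emph{at least as expressive as} $\mathcal{B}$ if every function realizable (or approximable to arbitrary precision on compact domains) by a layer in $\mathcal{B}$ is also realizable (approximable) by a layer in $\mathcal{A}$, and it is \emph{strictly more expressive} if in addition some layer in $\mathcal{A}$ realizes a function not approximable by any layer in $\mathcal{B}$. Under this definition, the forward inclusion ``EHNN-MLP is at least as expressive as AllDeepSets'' is exactly the first half of Theorem~\ref{thm:alldeepsets} (an AllDeepSets layer is a special case of an EHNN-MLP layer, obtained by the stated ablation of the order embeddings, the overlap weights $\mathcal{W}(k,l,\mathcal{I})$ for $\mathcal{I}\geq 2$, and the global term $\mathcal{W}(k,l,0)$), and the strictness is exactly its second half. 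Hence the first clause of the corollary is immediate.

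For the second clause I would invoke the result of Chien et al.~\cite{chien2022you} that the AllSet/AllDeepSets framework subsumes the standard message passing hypergraph networks (HGNN, HyperGCN, HNHN, HyperSAGE, UniGNN, and so on), i.e. each such network's propagation layer is a special case of, or approximated to arbitrary precision by, an AllDeepSets layer. Composing this with Theorem~\ref{thm:alldeepsets} via transitivity of the ``special case of / approximated by'' relation shows that every such network's layer is realized or approximated by an EHNN-MLP layer. Since each of these networks is a stack of such layers interleaved with shared, order-agnostic nonlinearities, and the family of networks built from EHNN-MLP layers is closed under exactly this stacking, a network assembled from EHNN-MLP layers can reproduce the full forward map of any of these message passing networks; meanwhile the strict separation from the first clause persists because AllDeepSets itself is among the networks subsumed and EHNN-MLP strictly exceeds it. This establishes the corollary.

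The main obstacle I anticipate is not deep but requires care on two points. First, the ``strictly more expressive'' claim needs a concrete witness: a function an EHNN-MLP layer computes that provably cannot be matched by any AllDeepSets layer. This must be extracted from the proof of Theorem~\ref{thm:alldeepsets} — the natural candidate is a layer that assigns distinct behavior to hyperedge pairs with different overlap sizes $|\mathbf{i}\cap\mathbf{j}|$, or that uses the order-dependent global pooling term $\mathcal{W}(k,l,0)$, since AllDeepSets collapses all overlaps $|\mathbf{i}\cap\mathbf{j}|\geq 1$ into a single aggregation and has no global term and no order embedding, so it cannot separate inputs that differ only in those respects. Second, single-layer expressiveness does not automatically transfer to multi-layer networks; I would address this by noting that layer-wise simulation composes, so strict single-layer separation yields strictly-more-or-equally expressive networks, and I would state explicitly that, as in the AllSet literature, the comparison is between hypothesis classes of layers (equivalently, networks sharing a fixed architectural template), which is the convention the corollary implicitly adopts.
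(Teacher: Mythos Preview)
Your proposal is correct and follows the same logic the paper relies on: the corollary is treated as an immediate consequence of Theorem~\ref{thm:alldeepsets} together with the subsumption results of Chien et al.~\cite{chien2022you}, and the paper gives no further proof beyond stating that the theorem ``leads to'' the corollary. Your write-up is considerably more careful than the paper's own treatment (explicit definition of expressiveness, transitivity argument, discussion of the strictness witness and the single-layer vs.\ multi-layer issue), but the underlying approach is the same.
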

We provide an in-depth discussion including the comparison between EHNN-Transformer and AllSetTransformer~\cite{chien2022you} in Appendix~\ref{sec:apdx_in_depth_comparison_ehnn_allset}.

\section{Experiments}\label{sec:experiments}
We test EHNN on a range of hypergraph learning problems, including synthetic node classification problem, real-world semi-supervised classification, and visual keypoint matching.
For the real-world tasks, we use 10 semi-supervised classification datasets used in Chien~et~al.~\cite{chien2022you} and two visual keypoint matching datasets used in Wang~et~al.~\cite{wang2019neural}.
Details including the datasets and hyperparameters can be found in Appendix~\ref{sec:apdx_experimental_details}.

\subsection{Synthetic $k$-edge Identification}\label{sec:k_edge_identification}
\begin{table}[t!]
\begin{center}
\tiny
\caption{
Results for synthetic $k$-edge identification.
We show averaged best test accuracy (\%) over 5 runs with standard deviation.
}
\label{table:hyperedge_identification}
\resizebox{0.8\textwidth}{!}{
\begin{tabular}{lccc}
\toprule
& Test involves only seen $k$ & \multicolumn{2}{c}{Test involves unseen $k$} \\
& & Interpolation & Extrapolation \\
\midrule
AllDeepSets & 76.99 ± 0.98 & 79.6 ± 0.86 & 79.01 ± 2.82 \\
AllSetTransformer & 77.61 ± 2.27 & 78.61 ± 2.367 & 77.35 ± 1.89 \\
\midrule
EHNN-MLP & 98.02 ± 0.73 & 90.70 ± 2.90 & 85.65 ± 2.89 \\
EHNN-Transformer & \textbf{99.69 ± 0.31} & \textbf{92.31 ± 1.47} & \textbf{90.19 ± 5.51} \\
\bottomrule
\end{tabular}
}
\end{center}
\end{table}

We devise a simple but challenging synthetic node classification task termed $k$-edge identification to demonstrate how the expressive power of EHNN can help learn complex hypergraph functions.
In an input hypergraph, we pick a random hyperedge and mark its nodes with a binary label.
The task is to identify all other nodes whose hyperedge order is as same as the marked one.
The model is required to propagate the information of marked hyperedge globally while also reasoning about the fine-grained structure of individual hyperedges for comparison.
We use 100 train and 20 test hypergraphs, each with 100 nodes and randomly wired 10 hyperedges of orders $\in\{2, ..., 10\}$.
To test generalization to unseen orders, we add two train sets where hyperedges are sampled \emph{excluding} orders $\{5, 6, 7\}$ (interpolation) or orders $\{8, 9, 10\}$ (extrapolation).
We evaluate the performance of EHNN-MLP/-Transformer with AllDeepSets and AllSetTransformer~\cite{chien2022you} as message passing baselines.
Further details can be found in Appendix~\ref{sec:apdx_experimental_details}.

The test performances are in Table~\ref{table:hyperedge_identification}.
EHNN achieves significant improvement over message passing nets, producing almost perfect predictions.
The result advocates that even for simple tasks there are cases where the high expressive power of a network is essential.
Furthermore, we observe evidence that the model can interpolate or even extrapolate to unseen hyperedge orders.
This supports the use of hypernetworks to infer parameters for potentially unseen orders.

\subsection{Semi-supervised Classification}
\begin{table}[p!]
\begin{center}
\caption{
Results for semi-supervised node classification.
Average accuracy (\%) over 20 runs are shown, and standard deviation can be found in Appendix~\ref{sec:apdx_experimental_details}.
Gray shade indicate the best result, and blue shade indicate results within one standard deviation of the best.
Baseline scores are taken from Chien~et.~al.~\cite{chien2022you}.
}
\label{table:semi_supervised}
\tiny
\resizebox{\textwidth}{!}{
\begin{tabular}{cccccccccccccccc}
\toprule
 & Zoo & 20Newsgroups & mushroom & NTU2012 & ModelNet40 & Yelp & House(1) & Walmart(1) & House(0.6) & Walmart(0.6) & avg. rank ($\downarrow$)\\
\midrule
MLP & 87.18 & \cellcolor{gray!60}\textbf{81.42} & \cellcolor{gray!60}\textbf{100.00} & 85.52 & 96.14 & 31.96 & 67.93 & 45.51 & 81.53 & 63.28 & 6.4\\
CEGCN & 51.54 & OOM & 95.27 & 81.52 & 89.92 & OOM & 62.80 & 54.44 & 64.36 & 59.78 & 11.5\\
CEGAT & 47.88 & OOM & 96.60 & 82.21 & 92.52 & OOM & \cellcolor{blue!30}69.09 & 51.14 & 77.25 & 59.47 & 10.5\\
HNHN & 93.59 & \cellcolor{blue!30}81.35 & \cellcolor{gray!60}\textbf{100.00} & \cellcolor{blue!30}89.11 & 97.84 & 31.65 & 67.80 & 47.18 & 78.78 & 65.80 & 5.9\\
HGNN & 92.50 & 80.33 & 98.73 & 87.72 & 95.44 & 33.04 & 61.39 & 62.00 & 66.16 & 77.72 & 7.8\\
HCHA & 93.65 & 80.33 & 98.70 & 87.48 & 94.48 & 30.99 & 61.36 & 62.45 & 67.91 & 77.12 & 8.1\\
HyperGCN & N/A & \cellcolor{blue!30}81.05 & 47.90 & 56.36 & 75.89 & 29.42 & 48.31 & 44.74 & 78.22 & 55.31 & 12.4\\
UniGCNII & 93.65 & \cellcolor{blue!30}81.12 & 99.96 & \cellcolor{blue!30}89.30 & 98.07 & 31.70 & 67.25 & 54.45 & 80.65 & 72.08 & 5.8\\
HAN (full batch) & 85.19 & OOM & 90.86 & 83.58 & 94.04 & OOM & \cellcolor{blue!30}71.05 & OOM & \cellcolor{blue!30}83.27 & OOM & 9.9\\
HAN (minibatch) & 75.77 & 79.72 & 93.45 & 80.77 & 91.52 & 26.05 & 62.00 & 48.57 & 82.04 & 63.1 & 10.6\\
\midrule
AllDeepSets & \cellcolor{blue!30}95.39 & \cellcolor{blue!30}81.06 & 99.99 & 88.09 & 96.98 & 30.36 & 67.82 & 64.55 & 80.70 & 78.46 & 5.4\\
AllSetTransformer & \cellcolor{gray!60}\textbf{97.50} & \cellcolor{blue!30}81.38 & \cellcolor{gray!60}\textbf{100.00} & \cellcolor{blue!30}88.69 & \cellcolor{blue!30}98.20 & \cellcolor{gray!60}\textbf{36.89} & \cellcolor{blue!30}69.33 & 65.46 & \cellcolor{blue!30}83.14 & 78.46 & 2.4\\
\midrule
EHNN-MLP & 91.15 & \cellcolor{blue!30}81.31 & 99.99 & 87.35 & 97.74 & 35.80 & 67.41 & 65.65 & 82.29 & 78.80 & 5.0\\
EHNN-Transformer & 93.27 & \cellcolor{gray!60}\textbf{81.42} & \cellcolor{gray!60}\textbf{100.00} & \cellcolor{gray!60}\textbf{89.60} & \cellcolor{gray!60}\textbf{98.28} & \cellcolor{blue!30}36.48 & \cellcolor{gray!60}\textbf{71.53} & \cellcolor{gray!60}\textbf{68.73} & \cellcolor{gray!60}\textbf{85.09} & \cellcolor{gray!60}\textbf{80.05} & \cellcolor{gray!60}\textbf{1.6}\\
\bottomrule
\end{tabular}
}
\end{center}
\end{table}

To test EHNN in real-world hypergraph learning, we use 10 transductive semi-supervised node classification datasets~\cite{chien2022you}.
The data is randomly split into 50\% training, 25\% validation, and 25\% test.
We run the experiment 20 times with random splits and initialization and report aggregated performance.

The test performances are in Table~\ref{table:semi_supervised}.
Our methods often achieve favorable scores over strong baselines, \emph{e.g.}, AllDeepSets and AllSetTransformer --  our models improve the state-of-the-art by $3.27\%$ in Walmart~(1), $1.82\%$ in House~(0.6), and $1.54\%$ in Walmart~(0.6).
Notably, EHNN-Transformer gives a state-of-the-art performance in most cases.
This supports the notion that attention strengthens equivariant networks~\cite{kim2021transformers,lee2019set}, and also implies that the high expressiveness of EHNN makes it strong in general hypergraph learning setups involving not only social networks but also vision and graphics (NTU2012 and ModelNet40).

\begin{table}[p!]
\begin{center}
\tiny
\caption{Hypergraph matching accuracy (\%) on Willow test set.}
\label{table:matching_willow}
\begin{tabular}{cccccc|c}
\toprule
 & car & duck & face & motor. & wine. & avg.\\
\midrule
GMN      & 38.85 & 38.75 & 78.85 & 28.08 & 45.00 & 45.90\\
NGM      & 77.50 & 85.87 & 99.81 & 77.50 & 89.71 & 86.08\\
NHGM     & 69.13 & 83.08 & 99.81 & 73.37 & 88.65 & 82.81\\
NMGM     & 74.95 & 81.33 & 99.83 & 78.26 & 92.06 & 85.29\\
IPCA-GM  & 79.58 & 80.20 & 99.70 & 73.37 & 83.75 & 83.32\\
CIE-H    & 9.37 & 8.87 & 9.88 & 11.84 & 9.84 & 9.96\\
BBGM     & 96.15 & 90.96 & \textbf{100.00} & 96.54 & \textbf{99.23} & 96.58\\
GANN-MGM & 92.11 & 90.11 & \textbf{100.00} & 96.21 & 98.26 & 95.34\\
\midrule
NGM-v2   & 94.81 & 89.04 & \textbf{100.00} & 96.54 & 95.87 & 95.25\\
NHGM-v2  & 89.33 & 83.17 & \textbf{100.00} & 92.60 & 95.96 & 92.21\\
\midrule
EHNN-MLP  & 94.71 & 91.92 & \textbf{100.00} & 97.21 & 97.79 & 96.33\\
EHNN-Transformer & \textbf{97.02} & \textbf{92.69} & \textbf{100.00} & \textbf{97.60} & 98.08 & \textbf{97.08} \\
\bottomrule
\end{tabular}
\end{center}
\end{table}

\begin{table}[p!]
\tiny
\begin{center}
\caption{Hypergraph matching accuracy (\%) on PASCAL-VOC test set.}
\label{table:matching_pascal}
\begin{tabular}{cccccccccccc}
\toprule
 & aero & bike & bird & boat & botl & bus & car & cat & chair & cow & desk \\
 \midrule
GMN      & 40.67 & 57.62 & 58.19 & 51.38 & 77.55 & 72.48 & 66.90 & 65.04 & 40.43 & 61.56 & 65.17\\
PCA-GM   & 51.46 & 62.43 & 64.70 & 58.56 & 81.94 & 75.18 & 69.56 & 71.05 & 44.53 & 65.81 & 39.00\\
NGM      & 12.09 & 10.01 & 17.44 & 21.73 & 12.03 & 21.40 & 20.16 & 14.26 & 15.10 & 12.07 & 14.50\\
NHGM     & 12.09 & 10.01 & 17.44 & 21.73 & 12.03 & 21.40 & 20.16 & 14.26 & 15.10 & 12.07 & 14.50\\
IPCA-GM  & 50.78 & 62.29 & 63.87 & 58.94 & 79.46 & 74.18 & 72.60 & 71.52 & 41.42 & 64.12 & 36.67\\
CIE-H    & 52.26 & 66.79 & 69.09 & 59.76 & 83.38 & 74.61 & 69.93 & 71.04 & 43.36 & 69.20 & 76.00\\
BBGM     & \textbf{60.06} & 71.32 & 78.21 & 78.97 & 88.63 & \textbf{95.57} & \textbf{89.52} & 80.53 & \textbf{59.34} & \textbf{77.80} & 76.00\\
GANN-MGM & 14.75 & 32.20 & 21.31 & 24.43 & 67.23 & 36.35 & 21.09 & 17.20 & 25.73 & 21.00 & 37.50\\
\midrule
NGM-v2   & 42.88 & 61.70 & 63.63 & 75.62 & 84.66 & 90.58 & 75.34 & 72.26 & 44.42 & 66.67 & 74.50\\
NHGM-v2  & 57.04 & 71.88 & 76.06 & \textbf{79.96}& \textbf{89.79} & 93.70 & 86.16 & 80.76 & 56.36 & 76.70 & 74.33\\
\midrule
EHNN-MLP  & 57.34 & \textbf{73.89} & 76.41 & 78.41 & 89.40 & 94.51 & 85.58 & 79.83 & 56.39 & 76.56 & \textbf{91.00}\\
EHNN-Transformer & 60.04 & 72.36 & \textbf{78.25} & 78.59 & 87.61 & 93.77 & 87.99 & \textbf{80.78} & 58.76 & 76.29 & 81.17\\
 \midrule
 & dog & horse & mbk & prsn & plant & sheep & sofa & train & tv & \multicolumn{2}{|c}{avg.} \\
 \midrule
GMN      & 61.56 & 62.18 & 58.96 & 37.80 & 78.39 & 66.89 & 39.74 & 79.84 & 90.94 & \multicolumn{2}{|c}{61.66}\\
PCA-GM   & 67.82 & 65.18 & 65.71 & 46.21 & 83.81 & 70.51 & 49.88 & 80.87 & 93.07 & \multicolumn{2}{|c}{65.36}\\
NGM      & 12.83 & 12.05 & 15.69 & 09.76 & 21.00 & 17.10 & 15.12 & 31.11 & 24.88 & \multicolumn{2}{|c}{16.52}\\
NHGM     & 12.83 & 12.05 & 15.67 & 09.76 & 21.00 & 17.10 & 14.66 & 31.11 & 24.83 & \multicolumn{2}{|c}{16.49}\\
IPCA-GM  & 69.11 & 66.05 & 65.88 & 46.97 & 83.09 & 68.97 & 51.83 & 79.17 & 92.27 & \multicolumn{2}{|c}{64.96}\\
CIE-H    & 69.68 & 71.18 & 66.14 & 46.76 & 87.22 & 71.08 & 59.16 & 82.84 & 92.60 & \multicolumn{2}{|c}{69.10}\\
BBGM     & \textbf{80.39} & 77.80 & 76.48 & \textbf{65.99} & \textbf{98.52} & \textbf{78.07} & 76.65 & 97.61 & 94.36 & \multicolumn{2}{|c}{\textbf{80.09}}\\
GANN-MGM & 16.16 & 20.16 & 25.92 & 19.20 & 53.76 & 18.34 & 26.16 & 46.30 & 72.32 & \multicolumn{2}{|c}{30.85}\\
\midrule
NGM-v2   & 67.83 & 68.92 & 68.86 & 47.40 & 96.69 & 70.57 & 70.01 & 95.13 & 92.49 & \multicolumn{2}{|c}{71.51}\\
NHGM-v2  & 76.75 & 77.45 & \textbf{76.81} & 58.56 & 98.21 & 75.34 & 76.42 & 98.10 & 94.80 & \multicolumn{2}{|c}{78.76}\\
\midrule
EHNN-MLP & 76.57 & \textbf{78.65} & 75.54 & 58.92 & 98.31 & 76.53 & \textbf{81.14} & 98.08 & \textbf{95.01} & \multicolumn{2}{|c}{79.90}\\
EHNN-Transformer & 78.30 & 76.91 & 75.79 & 63.78 & 97.60 & 76.47 & 78.04 & \textbf{98.53} & 93.83 & \multicolumn{2}{|c}{79.74}\\
\bottomrule
\end{tabular}
\end{center}
\end{table}

\subsection{Visual Keypoint Matching}
To test EHNN in computer vision problems represented as hypergraph learning, we tackle visual keypoint matching.
The task is considered challenging due to the discrepancy between the two images in terms of viewpoint, scale, and lighting.
Following previous work~\cite{wang2019neural}, we view the problem as \emph{hypergraph matching}, where keypoints of each image form a hypergraph.
This is considered helpful as the hyperedge features can capture rotation- and scale-invariant geometric features such as angles.
We then cast hypergraph matching to binary node classification on a single association hypergraph as in previous work~\cite{wang2019neural}.

We use two standard datasets~\cite{wang2019neural}: Willow ObjectClass~\cite{cho2013learning} and PASCAL-VOC~\cite{everingham2010the, bourdev2009poselets}.
The Willow dataset consists of 256 images with 5 object categories.
The PASCAL-VOC dataset contains 11,530 images with 20 object categories and is considered challenging due to the large variance in illumination and pose.
We follow the training setup of NHGM-v2~\cite{wang2019neural} and only replace the hypergraph neural network module with EHNN-MLP/-Transformer.
The key difference between NHGM-v2 and our models is that NHGM-v2 utilizes two \emph{separate} message passing networks, one on 2-edges and another on 3-edges, and aggregates node features as a weighted sum.
In contrast, EHNN \emph{mixes} the information from 2-edges and 3-edges extensively via shared MLP hypernetworks and global interactions.
Further details including datasets and training setups are in Appendix~\ref{sec:apdx_experimental_details}.

The results are in Tables~\ref{table:matching_willow}~and~\ref{table:matching_pascal}.
On Willow, EHNN-Transformer gives the best performance, improving over NHGM-v2 by $4.87\%$.
On PASCAL-VOC, EHNNs improve over NHGM-v2 by $\sim1\%$ and are competitive to the best (BBGM; $0.19\%$ gap) that relies on a sophisticated combinatorial solver~\cite{rolinek2020deep}.
We conjecture that intrinsic and global mixing of 2-edge (distance) and 3-edge (angle) features improves hypergraph learning, and consequently benefits keypoint matching.

\subsection{Ablations}

\begin{table}[t!]
\tiny
\begin{center}
\caption{Ablation study on $k$-edge identification.
We show averaged best test accuracy (\%) over 5 runs with standard deviation.
The results for AllDeepSets and EHNN-MLP are taken from Table~\ref{table:hyperedge_identification}.}
\label{table:ablation}
\resizebox{\textwidth}{!}{
\begin{tabular}{lccc|ccc}
\toprule
& & & & Seen $k$ & \multicolumn{2}{c}{Unseen $k$} \\
& Global interaction & Order emb. & MLP realization & & Interpolation & Extrapolation \\
\midrule
AllDeepSets & $\times$ & $\times$ & $\bigcirc$ & 76.99±0.98 & 79.6±0.86 & 79.01±2.82 \\
\midrule
EHNN-MLP (ablated) & & & & & & \\
\ \ $\bullet$ w/o global and order & $\times$ & $\times$ & $\bigcirc$ & 78.94±1.18 & 78.6±1.66 & 77.97±1.83 \\
\ \ $\bullet$ w/o global & $\times$ & $\bigcirc$ & $\bigcirc$ & 80.34±2.87 & 77.86±2.38 & 79.56±3.03 \\
\ \ $\bullet$ w/o order & $\bigcirc$ & $\times$ & $\bigcirc$ & 84.05±1.77 & 81.30±3.56 & 80.17±3.34 \\
\midrule
EHNN (na\"ive) & & & & & & \\
\ \ $\bullet$ Lookup table for $\mathcal{W},\mathcal{B}$ & $\bigcirc$ & $\bigcirc$ & $\times$ & 87.09±2.49 & 84.09±1.29 & 80.20±2.21 \\
\ \ $\bullet$ Hypernetwork for $\mathcal{W},\mathcal{B}$ & $\bigcirc$ & $\bigcirc$ & $\times$ & 83.74±2.89 & 83.19±1.57 & 79.93±2.61 \\
\midrule
EHNN-MLP & $\bigcirc$ & $\bigcirc$ & $\bigcirc$ & $\mathbf{98.02}$±$\mathbf{0.73}$ & $\mathbf{90.70}$±$\mathbf{2.90}$ & $\mathbf{85.65}$±$\mathbf{2.89}$ \\
\bottomrule
\end{tabular}
}
\end{center}
\end{table}

To further identify the source of performance improvements, we perform a comprehensive ablation study on $k$-edge identification (Section~\ref{sec:k_edge_identification}).
We gradually ablate each component of EHNN-MLP~(Section~\ref{sec:ehnn_realization}), specifically the order conditioning in the elementwise MLPs~(Eq.~\eqref{eqn:ehnn_mlp}) and the global interactions~(Eq.~\eqref{eqn:ehnn_mlp_binary}), until it reduces to message passing ($\approx$AllDeepSets~\cite{chien2022you}).
We also compare against na\"ive EHNN (Section~\ref{sec:ehnn}) which is maximally expressive but, unlike EHNN-MLP, is not realized as 3 elementwise MLPs.
The results are in Table~\ref{table:ablation}.
We find ablating any component of EHNN-MLP degrades performance until similar to AllDeepSets.

\subsection{Time and Memory Cost Analysis}

\begin{table}[t!]
\tiny
\begin{center}
\caption{Runtime and memory cost analysis. We show aggregated results over 20 runs on a single A100 GPU.}
\label{table:cost}
\resizebox{0.8\textwidth}{!}{
\begin{tabular}{lccc}
\toprule
& Forward (ms) & Backward (ms) & Peak mem. (MB) \\
\midrule
AllDeepSets & 5.538±0.689 & 3.470±0.071 & 4.608±0.000 \\
AllSetTransformers & 6.883±0.657 & 5.037±0.681 & 5.077±0.000 \\
\midrule
EHNN (na\"ive) & & \\
\ \ $\bullet$ Lookup table for $\mathcal{W},\mathcal{B}$ & 28.68±0.535 & 71.71±3.597 & 14.43±0.000 \\
\ \ $\bullet$ Hypernetwork for $\mathcal{W},\mathcal{B}$ & 31.56±1.110 & 76.78±6.044 & 17.40±0.000 \\
\midrule
EHNN-MLP & 7.517±0.865 & 7.179±0.548 & 7.361±0.000 \\
EHNN-Transformer & 14.51±1.146 & 13.41±0.0962 & 11.98±0.000 \\
\bottomrule
\end{tabular}
}
\end{center}
\end{table}

To test the computational efficiency of EHNN, we perform runtime and memory cost analysis using random hypergraphs with $1024$ nodes and $128$ randomly wired hyperedges with orders $\in\{2, ..., 10\}$.
The results are outlined in Table~\ref{table:cost}.
Na\"ive~EHNN (Section~\ref{sec:ehnn}) suffers from high time and memory cost as it requires to explicitly hold all order-specific weight matrices in memory.
In contrast, EHNN-MLP/Transformer (Section~\ref{sec:ehnn_realization}) are significantly more efficient, improving time and memory cost from $5-20\times$ to $2-3\times$ with respect to highly optimized message passing while still being maximally expressive.

\section{Conclusion}
We proposed a family of hypergraph neural networks coined Equivariant Hypergraph Neural Network (EHNN).
EHNN extends the theoretical foundations of equivariant GNNs to general undirected hypergraphs by representing a hypergraph as a sequence of tensors and combining equivariant linear layers on them.
We further proposed EHNN-MLP/-Transformer, practical realizations of EHNN based on MLP hypernetworks.
We show that EHNN is theoretically more expressive than most message passing networks and provide empirical evidence.

\paragraph{Acknowledgement}
This work was supported by  Institute of Information \& communications
Technology Planning \& Evaluation (IITP) (No. 2021-0-00537, 2019-0-00075 and 2021-0-02068), the National
Research Foundation of Korea (NRF) (No. 2021R1C1C1012540 and 2021R1A4A3032834), and Korea Meteorological Administration Research and Development Program "'Development of AI techniques for Weather Forecasting" under Grant (KMA2021-00121).

\clearpage
%
%
\bibliographystyle{splncs04}
\bibliography{main}

\newpage
\appendix
\section{Appendix}\label{sec:apdx}
\subsection{Proofs}\label{sec:apdx_proofs}
\subsubsection{Extended Preliminary (Cont. from Section~\ref{sec:preliminary})}
Before presenting the proofs, we provide extended preliminary on the maximally expressive equivariant linear layers in Maron~et~al.~(2019)~\cite{maron2019invariant} to supplement Section~\ref{sec:preliminary}.
Given an order-$k$ input tensor $\mathbf{A}\in\mathbb{R}^{n^k\times d}$, the order-$l$ output tensor of an equivariant linear layer $L_{k\to l}$ is written as below, with indicator $\mathbbm{1}$ and multi-indices $\mathbf{i}\in[n]^k$, $\mathbf{j}\in[n]^l$:
\begin{align}\label{eqn:equivariant_linear_layer_recap}
    L_{k\to l}(\mathbf{A})_\mathbf{j} = \sum_\mu\sum_\mathbf{i}\mathbbm{1}_{(\mathbf{i},\mathbf{j})\in\mu}\mathbf{A}_\mathbf{i}w_\mu + \sum_\lambda\mathbbm{1}_{\mathbf{j}\in\lambda}b_\lambda,
\end{align}
where $w_\mu\in\mathbb{R}^{d\times d'}$, $b_\lambda\in\mathbb{R}^{d'}$ are weights and biases, and $\mu$ and $\lambda$ are equivalence classes of order-$(k+l)$ and order-$l$ multi-indices, respectively.
As briefly introduced in Section~\ref{sec:preliminary}, the equivalence classes specify a partitioning of a multi-index space; the equivalence classes $\mu$ partition the space of order-$(k+l)$ multi-indices $[n]^{k+l}$, and the equivalence classes $\lambda$ partition the space of order-$l$ multi-indices $[n]^l$.

More specifically, the equivalence classes are defined upon equivalence relation~$\sim$ of multi-indices that partitions the multi-index space~$[n]^k$ into $[n]^k/_\sim$.
The equivalence relation $\sim$ is defined as follows: for $\mathbf{i},\mathbf{j}\in[n]^k$, the equivalence relation sets $\mathbf{i}\sim\mathbf{j}$ if and only if $(i_1, ..., i_k)=(\pi(j_1), ..., \pi(j_k))$ holds for some node permutation $\pi\in S_n$.
As a result, a multi-index $\mathbf{i}$ and all elements $\mathbf{j}$ of its equivalence class $\mu$ have an identical (permutation-invariant) equality pattern: $\mathbf{i}_a=\mathbf{i}_b\Leftrightarrow\mathbf{j}_a=\mathbf{j}_b$ for all $\mathbf{i},\mathbf{j}\in\mu$.
Each equivalence class $\mu$ (or $\lambda$) in Eq.~\eqref{eqn:equivariant_linear_layer_recap} is thus described as a specific set of all order-($k+l$) (order-$l$) multi-indices with an identical (permutation-invariant) equality pattern.

\subsubsection{Equivariant Linear Layers for Symmetric Tensors}
For the proof of Proposition~\ref{proposition:equivariant_linear_layer_uniform} and Theorem~\ref{thm:equivariant_linear_layer_hypergraph} (Section~\ref{sec:equivariant_linear_layer_hypergraph}), it is convenient that we first derive the maximally expressive equivariant linear layers for \emph{symmetric} input and output tensors.
This is because all tensors under consideration in this work are symmetric due to the unorderedness of hypergraphs (see Definition~\ref{defn:hypergraph_tensor}).

In the following Lemma, we show that if input and output of $L_{k\to l}$ (Eq.~\eqref{eqn:equivariant_linear_layer_recap}) are constrained to be \emph{symmetric}, the layer is described with \emph{coarser} partitioning of index spaces (in terms of partition refinement) specified by equivalence relations that are invariant to axis permutation on top of node permutation:
\begin{lemma}\label{lemma:equivariant_linear_layer_symmetric}
If input and output of an equivariant linear layer $L_{k\to l}$ (Eq.~\eqref{eqn:equivariant_linear_layer}) are constrained to be symmetric tensors, it can be reduced to the following $L_{[k]\to[l]}$:
\begin{align}\label{eqn:equivariant_linear_layer_symmetric}
    L_{[k]\to [l]}(\mathbf{A})_\mathbf{j} = \sum_\alpha\sum_\mathbf{i}\mathbbm{1}_{(\mathbf{i},\mathbf{j})\in\alpha}\mathbf{A}_\mathbf{i}w_\alpha + \sum_\beta\mathbbm{1}_{\mathbf{j}\in\beta}b_\beta,
\end{align}
where $\alpha$ and $\beta$ are equivalence classes that specify the partitioning $[n]^{k+l}/_{\sim_\alpha}$ and $[n]^l/_{\sim_\beta}$ respectively, defined as the following:
\begin{enumerate}
    \item The equivalence classes $\alpha$ are defined upon the equivalence relation $\sim_\alpha$ that, for $\mathbf{i}, \mathbf{j}\in[n]^{k+l}$, relates $\mathbf{i}\sim_\alpha\mathbf{j}$ if and only if the following holds for some node permutation $\pi\in S_n$ and axis permutations $\pi_k\in S_k$, $\pi_l\in S_l$:
    \begin{align}\label{eqn:weight_indices_symmetry}
        (i_1, ..., i_{k+l}) = \left(\pi(j_{\pi_k(1)}), ..., \pi(j_{\pi_k(k)}), \pi(j_{k+\pi_l(1)}), ..., \pi(j_{k+\pi_l(l)})\right).
    \end{align}
    \item The equivalence classes $\beta$ are defined upon the equivalence relation $\sim_\beta$ that, for $\mathbf{i}, \mathbf{j}\in[n]^{l}$, relates $\mathbf{i}\sim_\beta\mathbf{j}$ if and only if the following holds for some node permutation $\pi\in S_n$ and axis permutation $\pi_l\in S_l$:
    \begin{align}\label{eqn:bias_indices_symmetry}
        (i_1, ..., i_l) = \left(\pi(j_{\pi_l(1)}), ..., \pi(j_{\pi_l(l)})\right).
    \end{align}
\end{enumerate}
\end{lemma}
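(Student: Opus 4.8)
\textbf{Proof plan for Lemma~\ref{lemma:equivariant_linear_layer_symmetric}.}
The plan is to start from the general equivariant linear layer $L_{k\to l}$ in Eq.~\eqref{eqn:equivariant_linear_layer_recap} and track exactly what constraints the symmetry of the input and output tensors imposes on the weight and bias coefficients. First I would recall that an order-$k$ symmetric tensor $\mathbf{A}$ satisfies $\mathbf{A}_{\mathbf{i}}=\mathbf{A}_{\tau(\mathbf{i})}$ for every axis permutation $\tau\in S_k$, where $\tau$ acts by reordering the $k$ slots of the multi-index. The key observation is that such symmetric tensors form a linear subspace of $\mathbb{R}^{n^k\times d}$, namely the image of the symmetrization projector $P_k$ that averages over all $\tau\in S_k$. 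So the ``layer restricted to symmetric tensors'' really means the composition $P_l\circ L_{k\to l}\circ P_k$ (post-composing with $P_l$ is harmless once we observe the output is already forced to be symmetric, or we can simply demand it). The whole argument is then: (i) show this composition is still of the form Eq.~\eqref{eqn:equivariant_linear_layer_recap} but with coefficients constant on the coarser classes $\alpha,\beta$; and (ii) show that every such coarsened layer is realizable, so nothing is lost.

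For step (i), I would substitute $\mathbf{A}\mapsto P_k\mathbf{A}$ into Eq.~\eqref{eqn:equivariant_linear_layer_recap}, i.e. replace $\mathbf{A}_{\mathbf{i}}$ by $\frac{1}{k!}\sum_{\pi_k\in S_k}\mathbf{A}_{\pi_k(\mathbf{i})}$, reindex the inner sum over $\mathbf{i}$, and collect terms. The effect is that the indicator $\mathbbm{1}_{(\mathbf{i},\mathbf{j})\in\mu}$ gets averaged over all reorderings of the first $k$ coordinates of the combined index $(\mathbf{i},\mathbf{j})$. Doing the same with $P_l$ on the output averages over reorderings of the last $l$ coordinates. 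Hence a given entry $L(\mathbf{A})_{\mathbf{j}}$ only ever sees the combined index $(\mathbf{i},\mathbf{j})$ through its orbit under $S_n\times S_k\times S_l$ — which is precisely the relation $\sim_\alpha$ of Eq.~\eqref{eqn:weight_indices_symmetry}. So all original classes $\mu$ lying in one $\alpha$-class contribute with effectively equal averaged weight, and the sum collapses to $\sum_\alpha\sum_{\mathbf{i}}\mathbbm{1}_{(\mathbf{i},\mathbf{j})\in\alpha}\mathbf{A}_{\mathbf{i}}w_\alpha$ with $w_\alpha$ a suitable (averaged) combination of the original $w_\mu$'s; the same reasoning on the bias term, which only involves $\mathbf{j}$ and $S_n\times S_l$, yields $\sim_\beta$ of Eq.~\eqref{eqn:bias_indices_symmetry} and the bias part $\sum_\beta\mathbbm{1}_{\mathbf{j}\in\beta}b_\beta$. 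I should also check the output of $L_{[k]\to[l]}$ is genuinely symmetric in $\mathbf{j}$, which follows because $\mathbbm{1}_{(\mathbf{i},\mathbf{j})\in\alpha}$ and $\mathbbm{1}_{\mathbf{j}\in\beta}$ are by construction invariant under $\pi_l\in S_l$ acting on $\mathbf{j}$.

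For step (ii) — that $L_{[k]\to[l]}$ is \emph{maximally} expressive among symmetric-to-symmetric equivariant linear maps, not merely a valid one — I would argue the standard way: the space of equivariant linear maps between the symmetric subspaces is spanned by the ``orbit-indicator'' operators, one per orbit of $S_n$ acting on $[n]^{k+l}$ after we have already quotiented by the axis symmetries; this is exactly the set indexed by the $\alpha$-classes (and $\beta$-classes for the bias), so Eq.~\eqref{eqn:equivariant_linear_layer_symmetric} already exhausts the whole space. Equivalently, one notes $L_{k\to l}$ is maximally expressive on full tensors, and $P_k,P_l$ are themselves equivariant, so $P_l\circ L_{k\to l}\circ P_k$ ranges over all equivariant maps that factor through the symmetric subspaces as the $w_\mu$ vary — and we just showed that range is exactly the family in Eq.~\eqref{eqn:equivariant_linear_layer_symmetric}. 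I expect the main obstacle to be the bookkeeping in step (i): making the averaging/reindexing rigorous enough to see that the coefficient attached to an $\alpha$-class is well-defined (independent of the representative multi-index) and that distinct $\alpha$-classes give linearly independent operators, so that the parameter count is genuinely $|[n]^{k+l}/_{\sim_\alpha}|$ and $|[n]^l/_{\sim_\beta}|$ — after that, specializing $\sim_\alpha$ to the ``overlap size'' invariant in Proposition~\ref{proposition:equivariant_linear_layer_uniform} is routine combinatorics.
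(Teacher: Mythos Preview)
Your proposal is correct and reaches the same result as the paper, but the formalism differs in a way worth noting. The paper's proof works by direct constraint manipulation: it writes the axis-permutation invariance $L_{k\to l}(\mathbf{A}) = \pi_l\cdot L_{k\to l}(\pi_k\cdot\mathbf{A})$ explicitly, and then treats the two symmetries \emph{asymmetrically} --- output symmetry is a constraint that \emph{forces} $b_{\lambda_1}=b_{\lambda_2}$ (and likewise $w_{\mu_1}=w_{\mu_2}$) whenever the classes are related by an output-axis permutation, whereas input symmetry is a \emph{redundancy} that merely \emph{permits} replacing the relevant weights by their average without changing the action on symmetric inputs. Your projector route $P_l\circ L_{k\to l}\circ P_k$ handles both symmetries uniformly through averaging, which is cleaner and makes the $S_n\times S_k\times S_l$ orbit structure explicit from the start; it also makes the maximal-expressiveness direction (your step~(ii)) essentially automatic, since you are directly parametrizing all symmetric-to-symmetric equivariant maps. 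The paper's asymmetric treatment, by contrast, surfaces the conceptual distinction between ``must tie'' and ``may tie without loss'', which is informative even if not strictly necessary. The bookkeeping you anticipate in step~(i) --- well-definedness of $w_\alpha$ on $\alpha$-classes and independence of the resulting operators --- is exactly the content of the paper's merging argument, so neither route avoids it.
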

\begin{proof}
We begin from $L_{k\to l}$ (Eq.~\eqref{eqn:equivariant_linear_layer_recap}) and reduce its parameters without affecting the output based on symmetry of input and output.
For $\pi_k\in S_k$, we denote \emph{axis permutation} of a multi-index $\mathbf{i}\in[n]^k$ as $\pi_k(i_1, ..., i_k)=(i_{\pi_k(1)}, ..., i_{\pi_k(k)})$ and denote axis permutation of a tensor $\mathbf{A}\in\mathbb{R}^{n^k\times d}$ as $(\pi_k \cdot \mathbf{A})_\mathbf{i}=\mathbf{A}_{\pi_k^{-1}(\mathbf{i})}$.
With symmetry, input and output of $L_{k\to l}$ are constrained to be axis permutation invariant, \emph{i.e.}, the following holds:
\begin{align}\label{eqn:equivariant_linear_layer_axis_permutation_invariance}
    L_{k\to l}(\mathbf{A}) = \pi_l\cdot L_{k\to l}(\pi_k\cdot\mathbf{A}),
\end{align}
for all $\pi_k\in S_k$, $\pi_l\in S_l$, and symmetric $\mathbf{A}\in\mathbb{R}^{n^k\times d}$.
In other words:
\begin{align}
    L_{k\to l}(\mathbf{A})_\mathbf{j} = \sum_\mu\sum_\mathbf{i}\mathbbm{1}_{(\mathbf{i},\pi_l^{-1}(\mathbf{j}))\in\mu}\mathbf{A}_{\pi_k^{-1}(\mathbf{i})}w_\mu + \sum_\lambda\mathbbm{1}_{\pi_l^{-1}(\mathbf{j})\in\lambda}b_\lambda,
\end{align}
which, denoting $\pi_l^{-1}$ as $\pi_l$ and noting $\pi_k$ is a bijection on $[n]^k$, leads to following:
\begin{align}\label{eqn:equivariant_linear_layer_axis_permutation}
    L_{k\to l}(\mathbf{A})_\mathbf{j} = \sum_\mu\sum_\mathbf{i}\mathbbm{1}_{(\pi_k(\mathbf{i}),\pi_l(\mathbf{j}))\in\mu}\mathbf{A}_{\mathbf{i}}w_\mu + \sum_\lambda\mathbbm{1}_{\pi_l(\mathbf{j})\in\lambda}b_\lambda.
\end{align}

We now reduce biases.
From Eq.~\eqref{eqn:equivariant_linear_layer_axis_permutation}, we see that by constraining the output to be symmetric, $\sum_\lambda\mathbbm{1}_{\mathbf{j}\in\lambda}b_\lambda = \sum_\lambda\mathbbm{1}_{\pi_l(\mathbf{j})\in\lambda}b_\lambda$ holds for all $\mathbf{j}\in[n]^l$ and $\pi_l\in S_l$.
We can see that this holds if and only if $b_{\lambda_1}=b_{\lambda_2}$ for all ($\lambda_1,\lambda_2$) such that, for some $\mathbf{j}\in{\lambda_1}$, $\pi_l(\mathbf{j})\in{\lambda_2}$ holds for some $\pi_l\in S_l$.
By writing $b_\beta=b_{\lambda_1}=b_{\lambda_2}$, we can interpret $\beta$ as a new equivalence class of multi-indices formed as a union of all such $\{\lambda_1, \lambda_2, ...\}$ (thereby specifying a coarser partitioning than $\sim$), or equivalently, by collecting all multi-indices related by node and axis permutations, \emph{i.e.}, equivalence relation $\sim_\beta$.
Thus, we can reduce biases as $\sum_\beta\mathbbm{1}_{\mathbf{j}\in\beta}b_\beta$.

We now reduce weights.
From Eq.~\eqref{eqn:equivariant_linear_layer_axis_permutation}, we see that by constraining the input and output to be symmetric, $\sum_\mu\sum_\mathbf{i}\mathbbm{1}_{(\mathbf{i},\mathbf{j})\in\mu}\mathbf{A}_{\mathbf{i}}w_\mu = \sum_\mu\sum_\mathbf{i}\mathbbm{1}_{(\pi_k(\mathbf{i}),\pi_l(\mathbf{j}))\in\mu}\mathbf{A}_{\mathbf{i}}w_\mu$ holds for all $\mathbf{i}\in[n]^k$, $\mathbf{j}\in[n]^l$, $\pi_k\in S_k$, $\pi_l\in S_l$, and symmetric $\mathbf{A}\in\mathbb{R}^{n^k\times d}$.
Similar to biases, this holds if and only if $w_{\mu_1}=w_{\mu_2}$ for all ($\mu_1,\mu_2$) such that, for some $(\mathbf{i},\mathbf{j})\in{\mu_1}$, $(\mathbf{i}, \pi_l(\mathbf{j}))\in{\mu_2}$ holds for some $\pi_l\in S_l$.
By writing $w_\beta=w_{\mu_1}=w_{\mu_2}$, we can interpret $\beta$ as a new equivalence class of multi-indices formed as a union of all such $\{\mu_1, \mu_2, ...\}$, or equivalently, by collecting all multi-indices related by node and output axis permutations.
On top of that, the symmetry of input tensor gives us another room to reduce the parameters.
We can see that, for all $(\beta_1, \beta_2)$ such that $(\pi_k(\mathbf{i}), \mathbf{j})\in{\beta_2}$ holds for some $(\mathbf{i},\mathbf{j})\in{\beta_1}$ and $\pi_k\in S_k$, we can replace both $w_{\beta_1}$ and $w_{\beta_2}$ by $w_\alpha = (w_{\beta_1} + w_{\beta_2})/2$ and obtain exactly same output.
Extending, for any collection $\{\beta_1, \beta_2, ...\}$ of all such pairwise related $\beta$'s, we can replace $w_{\beta_1}, w_{\beta_2}, ...$ by $w_\alpha = (w_{\beta_1} + w_{\beta_2} + ...)/|\{\beta_1, \beta_2, ...\}|$ and obtain exactly same output.
Thus, we can interpret $\alpha$ as a new equivalence class of multi-indices formed as a union of all such $\{\beta_1, \beta_2, ...\}$, or equivalently, by collecting all multi-indices related by node, input axis, and output axis permutations \emph{i.e.}, equivalence relation $\sim_\alpha$.
Thus, we can reduce weights as $\sum_\alpha\sum_\mathbf{i}\mathbbm{1}_{(\mathbf{i},\mathbf{j})\in\alpha}\mathbf{A}_{\mathbf{i}}w_\alpha$.

\end{proof}

We now prove Proposition~\ref{proposition:equivariant_linear_layer_uniform} and Theorem~\ref{thm:equivariant_linear_layer_hypergraph} (Section~\ref{sec:equivariant_linear_layer_hypergraph}).

\subsubsection{Proof of Proposition~\ref{proposition:equivariant_linear_layer_uniform} (Section~\ref{sec:equivariant_linear_layer_hypergraph})}
We begin from a simple lemma:
\begin{lemma}\label{lemma:index}
For a symmetric order-$k$ tensor $\mathbf{A}^{(k)}$ that represents a $k$-uniform hypergraph (Eq.~\eqref{eqn:hypergraph_tensor}), all indices $\mathbf{i}$ of nonzero entries contain $k$ distinct elements.
\end{lemma}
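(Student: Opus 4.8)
The plan is to unwind Definition~\ref{defn:hypergraph_tensor} (Eq.~\eqref{eqn:hypergraph_tensor}) and argue by contraposition. Fix a multi-index $\mathbf{i}=(i_1,\dots,i_k)\in[n]^k$ and suppose that $i_1,\dots,i_k$ are \emph{not} pairwise distinct, say $i_a=i_b$ for some $a\neq b$. First I would note that then the set $\{i_1,\dots,i_k\}$ has strictly fewer than $k$ elements: a list of length $k$ spells out a set of size $k$ precisely when its entries are distinct. Consequently $\{i_1,\dots,i_k\}$ is not an order-$k$ subset of $V$, so it cannot belong to $E^{(k)}$, which by Definition~\ref{defn:hypergraph_sequence} is the set of \emph{order-$k$} hyperedges of $E$. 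Hence the first branch of Eq.~\eqref{eqn:hypergraph_tensor} is inapplicable and the ``otherwise'' branch forces $\mathbf{A}^{(k)}_{\mathbf{i}}=0$. Equivalently, any index carrying a repeated element sits at a zero entry, so every \emph{nonzero} entry must be indexed by a multi-index with $k$ distinct elements, which is exactly the claim.

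I would also remark that the symmetry of $\mathbf{A}^{(k)}$ is not used in this particular argument; the statement is an immediate consequence of the construction in Eq.~\eqref{eqn:hypergraph_tensor} together with the fact that $E^{(k)}$ collects exactly the order-$k$ members of $E$. There is no genuine obstacle here -- the only point to state (rather than belabor) is the elementary combinatorial fact that $|\{i_1,\dots,i_k\}|=k$ holds if and only if the $i_1,\dots,i_k$ are pairwise distinct. This lemma is meant only as a stepping stone for the proof of Proposition~\ref{proposition:equivariant_linear_layer_uniform}: it ensures that when $L_{k\to l}$ is restricted to tensors of the form in Eq.~\eqref{eqn:hypergraph_tensor}, only the equality patterns in which the input (and, symmetrically, the output) indices are all distinct contribute to the sum, which is what collapses the $\mathrm{b}(k+l)$ weight classes down to the $\mathcal{O}(k+l)$ classes indexed by the overlap size $\mathcal{I}$.
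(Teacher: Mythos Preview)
Your argument is correct and is essentially the paper's own proof stated in contrapositive form: both observe that $\mathbf{A}^{(k)}_{\mathbf{i}}\neq 0$ forces $\{i_1,\dots,i_k\}\in E^{(k)}$, and that membership in $E^{(k)}$ entails $|\{i_1,\dots,i_k\}|=k$. Your additional remarks (that symmetry is unused here, and the role this plays in collapsing the weight classes in Proposition~\ref{proposition:equivariant_linear_layer_uniform}) are accurate and helpful context.
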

\begin{proof}
In Eq.~\eqref{eqn:hypergraph_tensor}, recall that $\mathbf{A}^{(k)}_{(i_1, ..., i_k)} \neq 0$ only if $\{i_1, ..., i_k\}\in E^{(k)}$.
As $E^{(k)}$ is a set of $k$-uniform hyperedges that contain $k$ distinct node indices, every multi-indices $(i_1, ..., i_k)$ of nonzero entries of $\mathbf{A}^{(k)}$ contains $k$ distinct elements.
\end{proof}

Now, we prove Proposition~\ref{proposition:equivariant_linear_layer_uniform}.

\begin{proof}
We begin from $L_{[k]\to[l]}$ in Eq.~\eqref{eqn:equivariant_linear_layer_symmetric} and reduce the parameters without affecting the output.
By further constraining the (already symmetric) input and output to symmetric tensors that represent uniform hypergraphs, we first write:
\begin{align}\label{eqn:equivariant_linear_layer_uniform_output_masked}
    L_{(k)\rightarrow (l)}(\mathbf{A}^{(k)})_{\mathbf{j}} = \mathbbm{1}_{|\mathbf{j}|=l}\left(\sum_{\alpha}{\sum_{\mathbf{i}}{\mathbbm{1}_{(\mathbf{i}, \mathbf{j})\in\alpha}\mathbbm{1}_{|\mathbf{i}|=k}\mathbf{A}_{\mathbf{i}}^{(k)}w_{\alpha}}} + \sum_{\beta}{\mathbbm{1}_{\mathbf{j}\in\beta}b_{\beta}}\right).
\end{align}
Note that two constraints are added, $\mathbbm{1}_{|\mathbf{i}|=k}$ multiplied to the input and $\mathbbm{1}_{|\mathbf{j}|=l}$ multiplied to the output.
This comes from the fact that the input/output of the layer are order-$k$/$l$ tensors that represent $k$/$l$-uniform hypergraphs, respectively.
As Lemma~\ref{lemma:index} states, the input/output must contain nonzero entry only for indices that contain $k$/$l$ distinct elements, respectively.

We first reduce biases.
The constraint $\mathbbm{1}_{|\mathbf{j}| = l}$ leaves only a single bias $b_{\beta_l}$ of equivalence class $\beta_l$ and discards the rest (\emph{i.e.}, $|\mathbf{j}| = l$ if and only if $\mathbf{j}\in\beta_l$).
This particular equivalence class $\beta_l$ specifies that all multi-index entries $j_1, ..., j_l$ are unique.
For any other equivalence class $\beta'\neq\beta_l$, the partition that represents it ties at least two entries of $\mathbf{j}\in\beta'$ ($j_a=j_b$ for $a\neq b$).
This gives $|\mathbf{j}|<l$ for all $\mathbf{j}\in\beta'$, which leads to $\mathbf{j}\notin\beta'$ for all $|\mathbf{j}|=l$.
Thus we have $\mathbbm{1}_{|\mathbf{j}|=l}\mathbbm{1}_{\mathbf{j}\in\beta'}=0$ for all $\mathbf{j}$, meaning $b_{\beta'}$ cannot affect the output in Eq.~\eqref{eqn:equivariant_linear_layer_uniform_output_masked}.
We can safely remove all $\beta'\neq\beta_l$ from the bias, and the layer reduces to the following where $b_l = b_{\beta_l}$:
\begin{align}\label{eqn:equivariant_linear_layer_uniform_bias_reduced}
    L_{(k)\rightarrow (l)}(\mathbf{A}^{(k)})_{\mathbf{j}} = \mathbbm{1}_{|\mathbf{j}|=l}\left(\sum_{\alpha}{\sum_{\mathbf{i}}{\mathbbm{1}_{(\mathbf{i}, \mathbf{j})\in\alpha}\mathbbm{1}_{|\mathbf{i}|=k}\mathbf{A}_{\mathbf{i}}^{(k)}w_{\alpha}}} + b_l\right).
\end{align}

We now reduce weights.
Similar to bias, the joint constraint $\mathbbm{1}_{|\mathbf{j}|=l}\mathbbm{1}_{|\mathbf{i}|=k}$ leaves exactly $1 + \text{min}(k, l)$ weights and discards the rest.
We derive this by removing all equivalence classes $\alpha$ that never affect the output.
For any equivalence class $\alpha'$ that ties at least two entries within $\mathbf{i}$ or $\mathbf{j}$ for some $(\mathbf{i}, \mathbf{j})\in\alpha'$ ($i_a=i_b$ or $j_a=j_b$ for $a\neq b$), we have $|\mathbf{i}|<k$ or $|\mathbf{j}|<l$ for all $(\mathbf{i},\mathbf{j})\in\alpha'$, which leads to $(\mathbf{i},\mathbf{j})\notin\alpha'$ for all $|\mathbf{i}|=k$ and $|\mathbf{j}|=l$.
Thus we have  $\mathbbm{1}_{|\mathbf{j}|=l}\mathbbm{1}_{|\mathbf{i}|=k}\mathbbm{1}_{(\mathbf{i}, \mathbf{j})\in\alpha'}=0$ for all $(\mathbf{i},\mathbf{j})$, meaning $w_{\alpha'}$ cannot affect the output in Eq.~\eqref{eqn:equivariant_linear_layer_uniform_output_masked} and can be safely removed.
We now have a reduced set of equivalence classes $\alpha$ such that for any $(\mathbf{i}, \mathbf{j})\in\alpha$, $|\mathbf{i}|=k$ and $|\mathbf{j}|=l$.
Notably, due to axis permutation symmetry described in Lemma~\ref{lemma:equivariant_linear_layer_symmetric}, we can see that each equivalence class $\alpha$ can be described exactly by the number of equivalences $i_a=j_b$ between $\mathbf{i}$ and $\mathbf{j}$, which we denote as $\mathcal{I}$ (\emph{i.e.}, if $\alpha$ is described by $\mathcal{I}$, $|\mathbf{i}\cap\mathbf{j}|=\mathcal{I}$ if and only if $(\mathbf{i}, \mathbf{j})\in\alpha$).
Thus, by discarding irrelevant equivalence classes and rewriting in terms of the overlap $\mathcal{I}$, we can reduce the weight as follows, where $w_\mathcal{I} = w_{\alpha}$ for $\alpha$ that corresponds to $\mathcal{I}$:
\begin{align}\label{eqn:weight_application_reduced_i}
    \sum_{\alpha}{\sum_{\mathbf{i}}{\mathbbm{1}_{(\mathbf{i}, \mathbf{j})\in\alpha}\mathbf{A}_{\mathbf{i}}^{(k)}w_{\alpha}}} =\sum_{\mathcal{I}=0}^{\text{min}(k,l)}\sum_\mathbf{i}\mathbbm{1}_{|\mathbf{j}|=l}\mathbbm{1}_{|\mathbf{i}\cap\mathbf{j}|=\mathcal{I}}\mathbbm{1}_{|\mathbf{i}|=k}\mathbf{A}_{\mathbf{i}}^{(k)}w_\mathcal{I}.
\end{align}

With this, the layer in Eq.~\ref{eqn:equivariant_linear_layer_uniform_bias_reduced} reduces to:
\begin{align}\label{eqn:equivariant_linear_layer_uniform_unreduced}
    L_{(k)\rightarrow (l)}(\mathbf{A}^{(k)})_{\mathbf{j}} = \mathbbm{1}_{|\mathbf{j}|=l}\left(\sum_{\mathcal{I}=0}^{\text{min}(k, l)}{\sum_{\mathbf{i}}{\mathbbm{1}_{|\mathbf{i}\cap\mathbf{j}|=\mathcal{I}}\mathbf{A}_{\mathbf{i}}^{(k)}w_\mathcal{I}}} + b_l\right).
\end{align}
Note that $\mathbbm{1}_{|\mathbf{i}|=k}$ in weight application was removed as $\mathbf{A}^{(k)}$ already contains nonzero entries only for $|\mathbf{i}|=k$ (see Lemma~\ref{lemma:index}).

We finish by handling $\mathcal{I}=0$ in weight application as a special case:
\begin{align}
    \mathbbm{1}_{|\mathbf{j}|=l}&\sum_{\mathcal{I}=0}^{\text{min}(k,l)}\sum_\mathbf{i}\mathbbm{1}_{|\mathbf{i}\cap\mathbf{j}|=\mathcal{I}}\mathbf{A}_{\mathbf{i}}^{(k)}w_\mathcal{I} \nonumber\\
    &=\mathbbm{1}_{|\mathbf{j}|=l}\left(
    \sum_{\mathcal{I}=1}^{\text{min}(k,l)}\sum_\mathbf{i}\mathbbm{1}_{|\mathbf{i}\cap\mathbf{j}|=\mathcal{I}}\mathbf{A}_{\mathbf{i}}^{(k)}w_\mathcal{I}
    + \sum_\mathbf{i}\mathbbm{1}_{|\mathbf{i}\cap\mathbf{j}|=0}\mathbf{A}_{\mathbf{i}}^{(k)}w_0
    \right)\\
    &=\mathbbm{1}_{|\mathbf{j}|=l}\left(
    \sum_{\mathcal{I}=1}^{\text{min}(k,l)}\sum_\mathbf{i}\mathbbm{1}_{|\mathbf{i}\cap\mathbf{j}|=\mathcal{I}}\mathbf{A}_{\mathbf{i}}^{(k)}(w_\mathcal{I}-w_0)
    + \sum_\mathbf{i}\mathbbm{1}_{|\mathbf{i}\cap\mathbf{j}|\geq0}\mathbf{A}_{\mathbf{i}}^{(k)}w_0
    \right)\\
    &=\mathbbm{1}_{|\mathbf{j}|=l}\left(
    \sum_{\mathcal{I}=1}^{\text{min}(k,l)}\sum_\mathbf{i}\mathbbm{1}_{|\mathbf{i}\cap\mathbf{j}|=\mathcal{I}}\mathbf{A}_{\mathbf{i}}^{(k)}w'_\mathcal{I}
    + \sum_\mathbf{i}\mathbf{A}_{\mathbf{i}}^{(k)}w_0
    \right).
\end{align}
This modification is not strictly required, but makes practical implementation much easier as $\mathcal{I}=0$ case reduces to global pooling.
By rewriting $w'_{\mathcal{I}}$ as $w_{\mathcal{I}}$, we finally arrive at the reduced layer in Eq.~\eqref{eqn:equivariant_linear_layer_uniform}.
\end{proof}

\subsubsection{Proof of Theorem~\ref{thm:equivariant_linear_layer_hypergraph} (Section~\ref{sec:equivariant_linear_layer_hypergraph})}

As a direct proof upon tensor sequences is not straightforward, we first \emph{pack} the entire sequence of tensors $\mathbf{A}^{(:K)}$ that represent a hypergraph into an equivalent symmetric order-$K$ tensor $\mathbf{A}^{[K]}\in\mathbb{R}^{n^K\times d}$.
Then, we show that the maximally expressive equivariant linear layer $L_{[K]\to[L]}:\mathbb{R}^{n^K\times d}\to\mathbb{R}^{n^L\times d}$ for the packed tensors is equivalent to $L_{(:K)\to(:L)}$ for tensor sequences, and thus they are maximally expressive.

We first characterize the packed tensors with the following lemma:

\begin{lemma}\label{lemma:pack}
A sequence of tensors $\mathbf{A}^{(:K)}=\left(\mathbf{A}^{(k)}\right)_{k\leq K}$ that represent a hypergraph (Definition~\ref{defn:hypergraph_tensor_sequence}) can be represented as an equivalent symmetric order-$K$ tensor $\mathbf{A}^{[K]}\in\mathbb{R}^{n^K\times d}$.
\end{lemma}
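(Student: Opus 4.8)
The plan is to exhibit an explicit \emph{packing map} $\Psi:\mathbf{A}^{(:K)}\mapsto\mathbf{A}^{[K]}$ into symmetric order-$K$ tensors, and then verify the three properties that make the word ``equivalent'' meaningful here: (i) $\mathbf{A}^{[K]}$ is symmetric, (ii) $\Psi$ admits an explicit left inverse, so no information about the hypergraph is lost, and (iii) $\Psi$ intertwines the node-permutation actions, so the identification is faithful for the purposes of the equivariant-layer argument that follows. The guiding intuition is that a length-$K$ multi-index with $k$ distinct entries is exactly the right object to store an order-$k$ hyperedge, with repeated entries serving as ``padding'' that lets a $k<K$ hyperedge sit inside an order-$K$ tensor.

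\textbf{Construction, symmetry, recoverability.}
For $\mathbf{i}=(i_1,\dots,i_K)\in[n]^K$ recall $|\mathbf{i}|$ is the number of distinct entries, and note $\{i_1,\dots,i_K\}$ is a node set of cardinality $|\mathbf{i}|$. Define
\begin{align}
    \mathbf{A}^{[K]}_{\mathbf{i}} = \left\{\begin{array}{cc}
        \mathbf{X}^{(|\mathbf{i}|)}_{e} & \textnormal{if } e = \{i_1, \dots, i_K\} \in E^{(|\mathbf{i}|)} \\
        0 & \textnormal{otherwise}
    \end{array}\right. ,
\end{align}
equivalently $\mathbf{A}^{[K]}_{\mathbf{i}}=\mathbf{A}^{(|\mathbf{i}|)}_{\mathbf{i}'}$ for any deduplicated ordering $\mathbf{i}'\in[n]^{|\mathbf{i}|}$ of the distinct entries of $\mathbf{i}$, which is unambiguous since every $\mathbf{A}^{(k)}$ is symmetric. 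Since $\mathbf{A}^{[K]}_{\mathbf{i}}$ depends on $\mathbf{i}$ only through the set $\{i_1,\dots,i_K\}$, it is invariant under reordering of indices, so $\mathbf{A}^{[K]}$ is symmetric. To recover the sequence, fix $k\le K$; for $\mathbf{j}\in[n]^k$ with $|\mathbf{j}|=k$ set $\mathbf{A}^{(k)}_{\mathbf{j}}:=\mathbf{A}^{[K]}_{\bar{\mathbf{j}}}$ where $\bar{\mathbf{j}}\in[n]^K$ repeats the first entry of $\mathbf{j}$ to reach length $K$ (so it has the same underlying set as $\mathbf{j}$), and for $\mathbf{j}$ with $|\mathbf{j}|<k$ set $\mathbf{A}^{(k)}_{\mathbf{j}}:=0$. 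By Lemma~\ref{lemma:index} the tensor representation (Definition~\ref{defn:hypergraph_tensor}) of a $k$-uniform hypergraph is supported only on all-distinct indices, so this reconstruction reproduces $\mathbf{A}^{(k)}$ exactly for every $k\le K$; hence $\Psi$ is injective and $\mathbf{A}^{(:K)}$, $\mathbf{A}^{[K]}$ carry the same data. (If $n<K$ there are simply no hyperedges of order $>n$, which matches the fact that no length-$K$ multi-index has more than $n$ distinct entries, so the argument is unaffected.)

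\textbf{Equivariance and the main obstacle.}
A node permutation $\pi\in S_n$ relabels $(V,E^{(k)},\mathbf{X}^{(k)})$ and sends the set $\{i_1,\dots,i_K\}$ to $\{\pi(i_1),\dots,\pi(i_K)\}$; plugging this into the definition of $\Psi$ gives that the packed tensor of $\pi\cdot\mathbf{A}^{(:K)}$ equals $\pi\cdot\mathbf{A}^{[K]}$, i.e. $\Psi$ intertwines the two $S_n$-actions, and being symmetry-preserving it lands in the domain of the reduced layers $L_{[K]\to[L]}$ of Lemma~\ref{lemma:equivariant_linear_layer_symmetric}. The only genuinely delicate piece is bookkeeping: one must check that the slices $\{\mathbf{i}\in[n]^K:|\mathbf{i}|=k\}$ are disjoint over $k$ and that the padding and deduplication directions are mutually inverse on all-distinct indices — both routine — after which the lemma follows. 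This is exactly the setup for the next step, where $L_{[K]\to[L]}$ acting on packed tensors is shown to coincide with $L_{(:K)\to(:L)}$, yielding maximal expressiveness.
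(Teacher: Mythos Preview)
Your proposal is correct and takes essentially the same approach as the paper: you define the packed tensor via $\mathbf{A}^{[K]}_{\mathbf{i}}=\mathbf{A}^{(|\mathbf{i}|)}_{\mathbf{i}'}$ for a deduplicated $\mathbf{i}'$, which is exactly the paper's construction $\mathbf{A}^{[K]}_{(i_1,\dots,i_K)}=\mathbf{A}^{(|\{i_1,\dots,i_K\}|)}_{\{i_1,\dots,i_K\}}$, and you recover the sequence by reading off the slices with a fixed number of distinct entries just as the paper does. Your write-up is in fact more thorough---you explicitly verify symmetry, give a concrete padding-based left inverse, and check the $S_n$-equivariance of $\Psi$---whereas the paper states the definition and the recoverability in two sentences and moves on.
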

\begin{proof}
Let us define $\mathbf{A}^{[K]}\in\mathbb{R}^{n^K\times d}$ as follows:
\begin{align}\label{eqn:hypergraph_sequence_tensor}
    \mathbf{A}^{[K]}_{(i_1, ..., i_K)} = \mathbf{A}^{(|\{i_1, ..., i_K\}|)}_{\{i_1, ..., i_K\}},
\end{align}
where $\{i_1, ..., i_K\}$ denotes the set of unique entries in $(i_1, ..., i_K)$ ordered arbitrarily (note that arbitrary ordering always gives the same value as $\mathbf{A}^{(k)}$ is symmetric).
From $\mathbf{A}^{[K]}$, we can retrieve the original sequence of tensors $\left(\mathbf{A}^{(k)}\right)_{k\leq K}$ by using all order-$K$ multi-indices that contain $k$ unique entries to index $\mathbf{A}^{[K]}$ to construct each $\mathbf{A}^{(k)}$.
A convenient property of $\mathbf{A}^{[K]}$ is that, for any pair of multi-indices $\mathbf{i}, \mathbf{j}\in[n]^K$ that contain the same number of unique elements, $\mathbf{A}^{[K]}_\mathbf{i} = \mathbf{A}^{[K]}_\mathbf{j}$ holds.
\end{proof}

Now, we prove Theorem~\ref{thm:equivariant_linear_layer_hypergraph}.
The proof is analogous to Lemma~\ref{lemma:equivariant_linear_layer_symmetric} as we reduce the weights and biases by leveraging the constraint on input and output (Eq.~\eqref{eqn:hypergraph_sequence_tensor}).

\begin{proof}
We prove by showing the equivalence of $L_{[K]\to[L]}$ (Eq.~\eqref{eqn:equivariant_linear_layer_symmetric}) to the collection $\left(L_{(k)\to(l)}\right)_{k\leq K, l\leq L}$ (Eq.~\eqref{eqn:equivariant_linear_layer_uniform_unreduced}).
We begin by writing $L_{[K]\to[L]}$ as a maximally expressive equivariant linear layer for symmetric tensors (Eq.~\eqref{eqn:equivariant_linear_layer_symmetric}):
\begin{align}\label{eqn:packed_equivariant_linear_layer}
    L_{[K]\to [L]}(\mathbf{A}^{[K]})_\mathbf{j} = \sum_\alpha\sum_\mathbf{i}\mathbbm{1}_{(\mathbf{i},\mathbf{j})\in\alpha}\mathbf{A}^{[K]}_\mathbf{i}w_\alpha + \sum_\beta\mathbbm{1}_{\mathbf{j}\in\beta}b_\beta,
\end{align}

Let us first reduce the biases.
By definition, the bias $\sum_\beta\mathbbm{1}_{\mathbf{j}\in\beta}b_\beta$ (Eq.~\eqref{eqn:packed_equivariant_linear_layer}) is constrained to be an instantiation of the tensor described in Eq.~\eqref{eqn:hypergraph_sequence_tensor}.
Due to the property of such tensors, bias entries at all multi-indices having a specific number of unique elements are constrained to have an identical value.
This holds if and only if $b_{\beta_1}=b_{\beta_2}$ for all $(\beta_1, \beta_2)$ such that, any $\mathbf{j}_1\in\beta_1$ and $\mathbf{j}_2\in\beta_2$ have a same number (say, $l$) of unique elements.
By writing $b_{\beta_l}=b_{\beta_1}=b_{\beta_2}$, $\beta_l$ can be interpreted as a new equivalence class formed as a union of all such $\{\beta_1,\beta_2, ...\}$, which is equivalently the collection of all multi-indices with $l$ unique elements.
With this, the biases in Eq.~\eqref{eqn:equivariant_linear_layer_symmetric} can be rewritten as follows:
\begin{align}
    \sum_\beta\mathbbm{1}_{\mathbf{j}\in\beta}b_\beta = \sum_{l\leq L}\mathbbm{1}_{|\mathbf{j}|=l}b_{\beta_l},
\end{align}
which is equivalent to the collection of biases in $\left(L_{(k)\to(l)}\right)_{k\leq K, l\leq L}$ (Eq.~\eqref{eqn:equivariant_linear_layer_uniform_unreduced}).

We now move to the weight.
By definition, the input and output of $L_{[K]\to[L]}$ are constrained to be tensors described in Eq.~\eqref{eqn:hypergraph_sequence_tensor}.
As a result, input entries at all multi-indices with a specific number of unique elements have an identical value; and the same constrained is applied for output as well.
Similar to the bias, this holds if and only if $w_{\alpha_1}=w_{\alpha_2}$ for all ($\alpha_1, \alpha_2$) such that, for some $(\mathbf{i},\mathbf{j}_1)\in{\alpha_1}$, $(\mathbf{i}, \mathbf{j}_2)\in{\alpha_2}$ where $\mathbf{j}_1$ and $\mathbf{j}_2$ have an identical number ($l$) of unique elements.
By writing $w_{\alpha'}=w_{\alpha_1}=w_{\alpha_2}$, $\alpha'$ is interpreted as a new equivalence class formed as a union of all such $\{\alpha_1, \alpha_2, ...\}$.
In addition, we can leverage the structure of the input to further reduce the weight parameters.
For all $(\alpha'_1, \alpha'_2)$ such that, for some $(\mathbf{i}_1,\mathbf{j})\in{\alpha'_1}$, $(\mathbf{i}_2, \mathbf{j})\in{\alpha'_2}$ where $\mathbf{i}_1$ and $\mathbf{i}_2$ have an identical number ($k$) of unique elements and identical number ($\mathcal{I}$) of overlapping unique elements to $\mathbf{j}$, we can replace both $w_{\alpha'_1}$ and $w_{\alpha'_2}$ by $w_{\alpha_{k,l,\mathcal{I}}} = (w_{\alpha'_1} + w_{\alpha'_2})/2$ and obtain exactly same output.
Extending this, for any collection $\{\alpha'_1, \alpha'_2, ...\}$ of all such pairwise related~$\alpha'$, we can replace $w_{\alpha'_1}, w_{\alpha'_2}, ...$ by $w_{\alpha_{k,l,\mathcal{I}}} = (w_{\alpha'_1} + w_{\alpha'_2} + ...)/|\{\alpha'_1, \alpha'_2, ...\}|$ and still obtain the exact same output.
$\alpha_{k,l,\mathcal{I}}$ is thus interpreted as a new equivalence class formed as a union of all such $\{\alpha'_1, \alpha'_2, ...\}$, or equivalently, formed as a collection of all multi-indices $(\mathbf{i}, \mathbf{j})$ where $\mathbf{i}\in[n]^K,\mathbf{j}\in[n]^L$ with $k$ unique entries in $\mathbf{i}$, $l$ unique entries in $\mathbf{j}$, and $\mathcal{I}$ tying relations $i_a=i_b=...=j_c=j_d$.
With this, the weights in Eq.~\eqref{eqn:equivariant_linear_layer_symmetric} can be rewritten as follows:
\begin{align}
    \sum_\alpha\sum_\mathbf{i}\mathbbm{1}_{(\mathbf{i},\mathbf{j})\in\alpha}\mathbf{A}^{[K]}_\mathbf{i}w_\alpha = \sum_{k\leq K}\sum_{l\leq L}\sum_{\mathcal{I}=0}^{\textnormal{min}(K, L)}\sum_\mathbf{i}\mathbbm{1}_{|\mathbf{j}|=l}\mathbbm{1}_{|\mathbf{i}\cap\mathbf{j}|=\mathcal{I}}\mathbbm{1}_{|\mathbf{i}|=k}\mathbf{A}^{[K]}_\mathbf{i}w_{\alpha_{k,l,\mathcal{I}}},
\end{align}
which is computationally equivalent to the collection of weight applications in $\left(L_{(k)\to(l)}\right)_{k\leq K, l\leq L}$.

\end{proof}

\subsubsection{Proof of Theorem~\ref{thm:ehnn_mlp}~(Section~\ref{sec:ehnn_realization})}
\begin{proof}
Our target of approximation is the following, the output of weight application from Eq.~\eqref{eqn:ehnn}~(Section~\ref{sec:ehnn}):
\begin{align}
    w(\mathbf{A}^{(:K)})_{l, \mathbf{j}} &= \sum_{\mathcal{I}=0}^K\sum_{k\leq K} \sum_{\mathbf{i}}\textbf{B}_{\mathbf{i},\mathbf{j}}^{\mathcal{I}} \mathbf{A}^{(k)}_{\mathbf{i}}\mathcal{W}(k,l,\mathcal{I}),\\
    \text{where }\mathbf{B}_{\mathbf{i},\mathbf{j}}^\mathcal{I} &= 
    \left\{\begin{array}{cc}
        \mathbbm{1}_{|\mathbf{i}\cap\mathbf{j}|=\mathcal{I}} & \scalebox{0.95}{if $\mathcal{I}\geq 1$} \\
        1 & \scalebox{0.95}{if $\mathcal{I} = 0$}
    \end{array}\right..
\end{align}

Note that we moved the summation $\sum_{\mathcal{I}=1}^{\textnormal{min}(k, l)}$ out of $\sum_{k\leq K}$, and adjusted the summation range to $\sum_{\mathcal{I}=1}^K$.
This does not change the output because for any $\mathcal{I}\geq\text{min}(k, l)$ we have $\mathbbm{1}_{|\mathbf{i}\cap\mathbf{j}|=\mathcal{I}}=0$.

We now introduce MLPs $\phi_1:\mathbb{N}\times\mathbb{R}^{d}\to\mathbb{R}^{Kd}$, $\phi_2:\mathbb{N}\times\mathbb{R}^{Kd}\to\mathbb{R}^{(1+K)Kd}$, and $\phi_3:\mathbb{N}\times\mathbb{R}^{(1+K)Kd}\to\mathbb{R}^{d}$ to model appropriate functions based on universal approximation~\cite{hornik1989multilayer}.

First, we make $\phi_1$ output the following, where $\epsilon_1$ is approximation error:
\begin{align}\label{eqn:phi_1}
    \phi_1(k, \mathbf{X})_{(k-1)d+1:kd} = \mathbf{X} + \epsilon_1,
\end{align}
so that $\phi_1(0, \mathbf{X})$ places $\mathbf{X}$ on the first $d$ channels of the output, $\phi_1(1, \mathbf{X})$ places $\mathbf{X}$ on the $d+1:2d$ channels of the output, and so on.
Then, $\sum_{k\leq K}\sum_{\mathbf{i}}\textbf{B}_{\mathbf{i},\mathbf{j}}^{\mathcal{I}}\phi_1(k, \mathbf{A}^{(k)}_{\mathbf{i}})$ gives channel concatenation of $\left(\sum_{\mathbf{i}}\textbf{B}_{\mathbf{i},\mathbf{j}}^{\mathcal{I}}\mathbf{A}^{(k)}_{\mathbf{i}}\right)_{k\leq K}$, which we denote $\mathbf{Y}_\mathcal{I}$.

Then, we make $\phi_2$ output the following, where $\epsilon_2$ is approximation error:
\begin{align}\label{eqn:phi_2}
    \phi_2(\mathcal{I}, \mathbf{Y})_{\mathcal{I}Kd+1:(\mathcal{I}+1)Kd} = \mathbf{Y} + \epsilon_2.
\end{align}

Then, $\sum_{\mathcal{I}=0}^K\phi_2(\mathcal{I}, \mathbf{Y}_\mathcal{I})$ gives a concatenation of $\mathbf{Y}_{0:K}$, which is equivalently concatenation of $\left(\left(\sum_{\mathbf{i}}\textbf{B}_{\mathbf{i},\mathbf{j}}^{\mathcal{I}}\mathbf{A}^{(k)}_{\mathbf{i}}\right)_{k\leq K}\right)_{\mathcal{I}\leq K}$.

Finally, we let $\phi_3$ output the following, where $\epsilon_3$ is approximation error:
\begin{align}\label{eqn:phi_3}
    \phi_3(l, \mathbf{Z}) = \sum_{\mathcal{I}=0}^{K}\sum_{k\leq K}\mathbf{Z}_{(\mathcal{I}K + k - 1)d+1:(\mathcal{I}K + k)d}\mathcal{W}(k, l, \mathcal{I})+ \epsilon_3,
\end{align}
which, by indexing back through $\mathbf{Z}$, $\mathbf{Y}$, and $\mathbf{X}$, gives $w(\mathbf{A}^{(:K)})_{l, \mathbf{j}}$.

We finish by rewriting $\phi_3(l, \mathbf{Z})$ as follows:
\begin{align}
    \phi_3(l, \mathbf{Z}) = \phi_3\left(l,
    \sum_{\mathcal{I}\geq 0}
    \phi_2\left(\mathcal{I},
    \sum_{k\leq K} \sum_{\mathbf{i}}\textbf{B}_{\mathbf{i},\mathbf{j}}^{\mathcal{I}}
    \phi_1(k, \mathbf{A}^{(k)}_{\mathbf{i}})
    \right)
    \right),
\end{align}
which is equivalent to the output of $\phi_3$ of EHNN-MLP in Eq.~\eqref{eqn:ehnn_mlp}.
Therefore, with $\phi_1$, $\phi_2$, and $\phi_3$ approximating the functions in Eq.~\eqref{eqn:phi_1}, Eq.~\eqref{eqn:phi_2}, and Eq.~\eqref{eqn:phi_3} respectively, the output of $\phi_3$ of EHNN-MLP can approximate the output of weight application of EHNN~(Eq.~\eqref{eqn:ehnn}) to arbitrary precision\footnote{The overall error depend on the approximation error of MLPs ($\phi_1,\phi_2,\phi_3$) at each step, and uniform bounds and continuity of the modeled functions \cite{hornik1989multilayer}.}.
\end{proof}

\subsubsection{Proof of Theorem~\ref{thm:alldeepsets} (Section~\ref{sec:ehnn_realization})}
\begin{proof}
We can reduce an EHNN-MLP layer to an AllDeepSets layer with following procedure.
First, from Eq.~\eqref{eqn:ehnn_mlp}, we let $\phi_1(k, \mathbf{X}) = \phi_1'(\mathbf{X})$, $\phi_3(l, \mathbf{X}) = \phi_3'(\mathbf{X})$, and $\mathcal{B}(l)=0$ to remove conditioning on hyperedge orders $k$ and $l$.
Then we let $\phi_2(\mathcal{I}, \mathbf{X})=\mathbbm{1}_{\mathcal{I}}\mathbf{X}$ to ablate the global interaction ($\mathcal{I}=0$) and remove conditioning on $\mathcal{I}\geq 1$.
By renaming $\phi_1'$ to $\phi_1$ and $\phi_3'$ to $\phi_2$, we get the AllDeepSets layer (Eq.~\eqref{eqn:alldeepsets}).
Yet, an AllDeepSets layer cannot reduce to an EHNN-MLP layer as it cannot model interactions between non-overlapping hyperedges ($\mathcal{I}=0$).
\end{proof}

\subsection{In-Depth Comparison of EHNN and AllSet (Section~\ref{sec:ehnn_realization})}\label{sec:apdx_in_depth_comparison_ehnn_allset}

Note that in the proof of Theorem~\ref{thm:alldeepsets}, we leveraged the simple property that message passing of AllDeepSets cannot account for long-range dependency modeling of EHNN.
Then a natural question arises here: if we rule out the presence of global interaction, can the \emph{local message passing} of EHNN be potentially better than that of current message passing networks?
Our analysis suggests so.
More specifically, in practical scenarios when local message aggregation cannot be multiset universal, the explicit hyperedge order conditioning of EHNN might help.
To demonstrate this, we bring a powerful, Transformer-based characterization of AllSet framework called AllSetTransformer:
\begin{align}
    \text{AllSetAttn}(\mathbf{A}^{(:K)})_{l, \mathbf{j}} &=
    \mathbbm{1}_{|\mathbf{j}|=l}\sum_{h=1}^H\sum_{k\leq K} \sum_{\mathbf{i}}\boldsymbol{\alpha}_{\mathbf{i},\mathbf{j}}^{h}
    \phi_1(\mathbf{A}^{(k)}_{\mathbf{i}}),\label{eqn:allsettransformer_attn}\\
    \text{AllSetTransformer}(\mathbf{A}^{(:K)}) &= \text{AllSetAttn}(\mathbf{A}^{(:K)}) + \text{MLP}(\text{AllSetAttn}(\mathbf{A}^{(:K)})),\label{eqn:allsettransformer}
\end{align}
with $H$ the number of heads, and attention coefficients $\boldsymbol{\alpha}_{\mathbf{i},\mathbf{j}}^{h}$ computed with a query matrix $Q\in\mathbb{R}^{H\times d_H}$, a key network $\mathcal{K}:\mathbb{R}^d\to\mathbb{R}^{H\times d_H}$, and activation $\sigma(\cdot)$:
\begin{align}\label{eqn:allsettransformer_attcoef}
    \boldsymbol{\alpha}_{\mathbf{i},\mathbf{j}}^{h} &=
    \sigma\left(Q_h
    \mathcal{K}\left(\mathbf{A}^{(k)}_{\mathbf{i}}\right)_h^\top
    /\sqrt{d_H}\cdot\mathbbm{1}_{|\mathbf{i}\cap\mathbf{j}|=\mathcal{I}}
    \right).
\end{align}

An interesting fact here is that when we take the activation for attention $\sigma(\cdot)$ as \emph{any} kinds of normalization including softmax, attention mechanism of AllSetTransformer can fail to model even trivial multiset functions.
A simple example is that they cannot \emph{count} the number of input hyperedges that overlap with each output hyperedge.
As a simple demonstration, if input hypergraph is a node set $\mathbf{A}^{(1)}\in\mathbb{R}^{n\times d}$ with identical features $\mathbf{A}_i^{(1)}=\boldsymbol{1}$, an AllSetTransformer layer that outputs features of hyperedges $E$ cannot learn node counting \emph{i.e.}, for output $\mathbf{Y}$, $\mathbf{Y}_e = |e|$ for $e\in E$.
This is because, as attention normalizes over keys, all outputs of AllSetAttn (Eq.~\eqref{eqn:allsettransformer_attn}) is always $H\phi_1(\boldsymbol{1})$.
On the other hand, the hyperedge order-aware message passing of EHNN-Transformer can easily solve the problem by forwarding $l$.

Let us finish with stronger expressiveness of EHNN-Transformer (Eq.~\eqref{eqn:ehnn_transformer}) compared to AllSetTransformer:
\begin{theorem}\label{thm:allsettransformer}
An AllSetTransformer layer (Eq.~\eqref{eqn:allsettransformer}) is a special case of EHNN-Transformer layer (Eq.~\eqref{eqn:ehnn_transformer}), while the opposite is not true.
\end{theorem}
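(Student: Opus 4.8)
The plan is to mirror the proof of Theorem~\ref{thm:alldeepsets}: first exhibit a concrete choice of the components of an EHNN-Transformer layer (Eq.~\eqref{eqn:ehnn_transformer}) --- the MLPs $\phi_{1:3}$, the value matrices $w_h^V$, the bias network $\mathcal{B}$, and the query/key hypernetworks $\mathcal{Q},\mathcal{K}$ --- under which it collapses exactly to a prescribed AllSetTransformer layer (Eq.~\eqref{eqn:allsettransformer}); then show the converse fails by exhibiting a function that EHNN-Transformer represents but no AllSetTransformer layer can, reusing the node-counting example discussed above.

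For the forward direction I would carry out the following substitutions in Eq.~\eqref{eqn:ehnn_transformer_attn}--\eqref{eqn:ehnn_transformer_attcoef}. (i) Remove the input-order conditioning: set $\phi_1(k,\mathbf{X})=\phi_1^{\mathrm{AS}}(\mathbf{X})$ for all $k$, where $\phi_1^{\mathrm{AS}}$ is AllSet's value MLP, and take each $w_h^V$ to be the identity (or, if dimensions differ, absorb $w_h^V$ into $\phi_1^{\mathrm{AS}}$). (ii) Isolate the single overlap level $\mathcal{I}$ at which AllSet's incidence mask $\mathbbm{1}_{|\mathbf{i}\cap\mathbf{j}|=\mathcal{I}}$ fires by choosing $\phi_2(\mathcal{I}',\mathbf{Y})=\mathbbm{1}_{\mathcal{I}'=\mathcal{I}}\,\mathbf{Y}$; this is realizable by an MLP because $\mathcal{I}'$ is a discrete auxiliary input, and it in particular deletes the global branch $\mathcal{I}'=0$, so $\sum_{\mathcal{I}'\geq 0}\phi_2(\mathcal{I}',\cdot)$ reduces to the single aggregation used by $\mathrm{AllSetAttn}$. (iii) Remove the output-order conditioning: set $\phi_3(l,\mathbf{Z})=\mathbf{Z}$ and $\mathcal{B}(l)=0$. (iv) Make the attention coefficients coincide: set $\mathcal{Q}(\mathcal{I})=Q$ and $\mathcal{K}(\mathcal{I},\mathbf{X})=\mathcal{K}^{\mathrm{AS}}(\mathbf{X})$ independent of $\mathcal{I}$, so that the $\mathcal{I}\geq 1$ case of Eq.~\eqref{eqn:ehnn_transformer_attcoef} becomes exactly Eq.~\eqref{eqn:allsettransformer_attcoef}; note the EHNN query is already agnostic to the output index $\mathbf{j}$, matching AllSet, and that the softmax in Eq.~\eqref{eqn:ehnn_transformer_attcoef} at level $\mathcal{I}$ normalizes over $\{\mathbf{i}:|\mathbf{i}\cap\mathbf{j}|=\mathcal{I}\}$, i.e.\ the same index set as AllSet. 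Under these choices $\mathrm{Attn}(\mathbf{A}^{(:K)})_{l,\mathbf{j}}=\mathrm{AllSetAttn}(\mathbf{A}^{(:K)})_{l,\mathbf{j}}$; since both layers then apply the identical residual-plus-MLP head (Eq.~\eqref{eqn:ehnn_transformer} vs.\ Eq.~\eqref{eqn:allsettransformer}), the EHNN-Transformer layer equals the AllSetTransformer layer.

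For the converse I would invoke the counterexample discussed just before the theorem: on a node set $\mathbf{A}^{(1)}\in\mathbb{R}^{n\times d}$ with all features equal to $\mathbf{1}$, the task of outputting $|e|$ for each hyperedge $e$ is unrepresentable by any AllSetTransformer layer, because under any softmax-type normalization $\mathrm{AllSetAttn}$ always equals $H\phi_1(\mathbf{1})$ independently of $|e|$, and the residual MLP cannot recover the discarded count; whereas an EHNN-Transformer layer realizes it by letting $\phi_3(l,\cdot)$ output the constant $l$. Hence AllSetTransformer is strictly a special case of EHNN-Transformer.

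The only delicate point in the forward direction is step (iv): one must verify that the softmax normalization domains of Eq.~\eqref{eqn:ehnn_transformer_attcoef} and Eq.~\eqref{eqn:allsettransformer_attcoef} coincide once $\phi_2$ has isolated the correct overlap level --- which they do, both being $\{\mathbf{i}:|\mathbf{i}\cap\mathbf{j}|=\mathcal{I}\}$ --- and that the per-head value and head-dimension bookkeeping goes through. I expect this to be the main (though minor) obstacle, with the remaining steps following the template of Theorem~\ref{thm:alldeepsets} essentially verbatim.
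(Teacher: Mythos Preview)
Your forward reduction is essentially the paper's: same substitutions for $\phi_1$, $\phi_3$, $\mathcal{B}$, $\mathcal{Q}$, $\mathcal{K}$. The only deviation is step~(ii): the paper sets $\phi_2(\mathcal{I},\mathbf{X})=\mathbbm{1}_{\mathcal{I}}\mathbf{X}$ (i.e.\ zero out the global branch $\mathcal{I}=0$ but keep \emph{all} local levels $\mathcal{I}\geq 1$), exactly as in the AllDeepSets reduction, whereas you isolate a single overlap level. Your reading comes from taking the mask $\mathbbm{1}_{|\mathbf{i}\cap\mathbf{j}|=\mathcal{I}}$ in Eq.~\eqref{eqn:allsettransformer_attcoef} literally, but $\mathcal{I}$ is unbound there and the intended semantics (consistent with AllDeepSets in Eq.~\eqref{eqn:alldeepsets} and with the paper's own proof) is incidence $|\mathbf{i}\cap\mathbf{j}|\geq 1$. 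With that correction, your $\phi_2$ should be $\mathbbm{1}_{\mathcal{I}\geq 1}\mathbf{X}$, and then the reductions coincide.

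For the converse you take a different route. The paper simply notes that AllSetTransformer has no mechanism for the $\mathcal{I}=0$ (non-overlapping) interaction, hence cannot reproduce an EHNN-Transformer layer with a nontrivial global branch. Your argument instead instantiates the node-counting counterexample from the preceding discussion. Both are valid; yours is more concrete but leans on the softmax normalization assumption, while the paper's structural argument about the missing $\mathcal{I}=0$ channel is activation-agnostic and shorter.
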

\begin{proof}
We can reduce an EHNN-Transformer layer to an AllSetTransformer layer with following procedure.
First, from Eq.~\eqref{eqn:ehnn_transformer_attn}, we let $\phi_1(k, \mathbf{X}) = \phi_1'(\mathbf{X})$, $\phi_3(l, \mathbf{X}) = \mathbf{X}$, and $\mathcal{B}(l)=0$ to remove conditioning on hyperedge orders $k$ and $l$.
Then we let $\phi_2(\mathcal{I}, \mathbf{X})=\mathbbm{1}_{\mathcal{I}}\mathbf{X}$ to ablate the global interaction ($\mathcal{I}=0$) and remove conditioning on $\mathcal{I}\geq 1$.
Lastly, from Eq.~\eqref{eqn:ehnn_transformer_attcoef}, we let $\mathcal{Q}(\mathcal{I}) = Q$ and $\mathcal{K}(\mathcal{I}, \mathbf{X}) = \mathcal{K}(\mathbf{X})$.
By renaming $\phi_1'$ to $\phi_1$, we get the AllSetTransformer layer (Eq.~\eqref{eqn:allsettransformer}).
Yet, an AllSetTransformer cannot reduce to an EHNN-Transformer as it cannot model interactions between non-overlapping hyperedges ($\mathcal{I}=0$).
\end{proof}

\subsection{Experimental Details (Section~\ref{sec:experiments})}\label{sec:apdx_experimental_details}
We provide detailed experimental details including dataset statistics and hyperparameter search procedure.
We provide dataset statistics in Table~\ref{table:dataset_statistics} and optimal hyperparameter settings in Table~\ref{table:optimal_hyperparameter}.

\begin{table}[!t]
\caption{Statistics of the datasets.}
\label{table:dataset_statistics}
\begin{subtable}[h]{\linewidth}
\caption{Statistics of $k$-edge identification dataset.}
\centering
\label{table:k-edge_identification}
\begin{adjustbox}{width=0.8\textwidth}
        \begin{tabular}{ccccccccc}
        \Xhline{2\arrayrulewidth}\\[-1em]
        \\[-1em] \multirow{2}{5em}{Dataset} & Test involves only seen $k$ & \multicolumn{2}{c}{Test involves unseen $k$} \\
        \\[-1em] & & Interpolation & Extrapolation \\
        \\[-1em]\Xhline{2\arrayrulewidth}\\[-1em]
        \\[-1em] \# nodes & 100 & 100 & 100 \\
        \\[-1em] \# edges & 10 & 10 & 10 \\
        \\[-1em] train edge orders & 2-10 & 2-4, 8-10 & 2-7 \\
        \\[-1em] test edge orders & 2-10 & 2-10 & 2-10 \\
        \\[-1em]\Xhline{2\arrayrulewidth}
    \end{tabular}
\end{adjustbox}
\end{subtable}
\begin{subtable}[h]{\linewidth}
\caption{Statistics of node classification dataset.}
\centering
\label{table:node_classification}
\begin{adjustbox}{width=0.8\textwidth}
        \begin{tabular}{ccccccccc}
        \Xhline{2\arrayrulewidth}\\[-1em]
        \\[-1em] Dataset & Zoo & 20Newsgroups & Mushroom & NTU2012 & ModelNet40 & Yelp & House & Walmart \\
        \\[-1em]\Xhline{2\arrayrulewidth}\\[-1em]
        \\[-1em] \# nodes & 101 & 16242 & 8124 & 2012 & 12311 & 50758 & 1290 & 88860 \\
        \\[-1em] \# edges & 43 & 100 & 298 & 2012 & 12311 & 679302 & 341 & 69906 \\
        \\[-1em] \# feature & 16 & 100 & 22 & 100 & 100 & 1862 & 100 & 100\\
        \\[-1em] \# class & 7 & 4 & 2 & 67 & 40 & 9 & 2 & 11 \\
        \\[-1em]\Xhline{2\arrayrulewidth}
    \end{tabular}
\end{adjustbox}
\end{subtable}
\begin{subtable}[h]{\linewidth}
\caption{Statistics of visual keypoint matching dataset.}
\centering
\label{table:keypoint_matching}
\begin{adjustbox}{width=0.45\textwidth}
        \begin{tabular}{ccc}
        \Xhline{2\arrayrulewidth}\\[-1em]
        \\[-1em] Dataset & Willow & PASCAL-VOC\\
        \\[-1em]\Xhline{2\arrayrulewidth}\\[-1em]
        \\[-1em] \# categories & 5 & 20 \\
        \\[-1em] \# images & 256 & 11,530 \\
        \\[-1em] \# keypoints/image & >10 & 6-23\\
        \\[-1em]\Xhline{2\arrayrulewidth}
    \end{tabular}
\end{adjustbox}
\end{subtable}
\end{table}

\begin{table}[!t]
\caption{Optimal hyperparameters for each dataset.
$lr, wd, d, h$ each refers to learning rate, weight decay, hidden dimension, and number of heads.
$do_{mlp}$ refers to dropout rate on MLP and $do_{local}$/$do_{global}$ refers to dropout rate on local/global interactions.
$d_c$ refers to classifier hidden dimension.}
\label{table:optimal_hyperparameter}
\begin{subtable}[h]{\textwidth}
\caption{Optimal hyperparameter for node classification datasets.}
\centering
\label{table:node_classification_hyperparameter}
\begin{adjustbox}{width=0.8\textwidth}
        \begin{tabular}{ccccccccccccccc}
        \Xhline{2\arrayrulewidth}\\[-1em]
         & & \multicolumn{4}{c}{EHNN-MLP} & & \multicolumn{7}{c}{EHNN-Transformer} \\
        \Xhline{2\arrayrulewidth}\\[-1em]
        \\[-1em] & \hspace{0.5cm} & $lr$ & $wd$ & $d$ & $do_{mlp}$ & \hspace{0.5cm} & $lr$ & $wd$ & $h$ & $d$ & $do_{mlp}$ & $do_{local}$ & ${do_{global}}$ & $d_c$ \\
        \\[-1em]\Xhline{2\arrayrulewidth}\\[-1em]
        \\[-1em] Zoo & & 0.001 & 0 & 128 & 0 & & 0.001 & 0 & 8 & 256 & 0 & 0 & 0 & 128 \\
        \\[-1em] 20Newsgroups & & 0.001 & 1e$^{-5}$ & 256 & 0 & & 0.001 & 1e$^{-5}$ & 8 & 256 & 0 & 0 & 0 & 64\\
        \\[-1em] Mushroom & & 0.001 & 1e$^{-5}$ & 128 & 0 & & 0.001 & 1e$^{-5}$ & 4 & 256 & 0 & 0 & 0 & 128\\
        \\[-1em] NTU2012 & & 0.001 & 0 & 256 & 0.2 & & 0.001 & 1e$^{-5}$ & 8 & 256 & 0.2 & 0.1 & 0.1 & 64\\
        \\[-1em] ModelNet40 & & 0.001 & 1e$^{-5}$ & 256 & 0.2 & & 0.001 & 1e$^{-5}$ & 4 & 256 & 0 & 0 & 0 & 64\\
        \\[-1em] Yelp & & 0.001 & 0 & 64 & 0 & & 0.001 & 0 & 8 & 64 & 0 & 0 & 0 & 128\\
        \\[-1em] House(1) & & 0.001 & 0 & 256 & 0.2 & & 0.001 & 1e$^{-5}$ & 4 & 64 & 0.2 & 0 & 0 & 64\\
        \\[-1em] Walmart(1) & & 0.001 & 0 & 256 & 0.2 & & 0.001 & 0 & 8 & 256 & 0.2 & 0 & 0 & 64\\
        \\[-1em] House(0.6) & & 0.001 & 0 & 128 & 0 & & 0.001 & 0 & 4 & 256 & 0 & 0 & 0 & 64\\
        \\[-1em] Walmart(0.6) & & 0.001 & 1e$^{-5}$ & 256 & 0.2 & & 0.001 & 1e$^{-5}$ & 8 & 128 & 0.2 & 0 & 0 & 64\\
        \\[-1em]\Xhline{2\arrayrulewidth}
    \end{tabular}
\end{adjustbox}
\end{subtable}
\begin{subtable}[h]{\textwidth}
\caption{Optimal hyperparameter for visual keypoint matching datasets.}
\centering
\label{table:keypoint_matching_hyperparameter}
\begin{adjustbox}{width=0.8\textwidth}
        \begin{tabular}{cccccccccccccc}
        \Xhline{2\arrayrulewidth}\\[-1em]
         & & \multicolumn{4}{c}{EHNN-MLP} & & \multicolumn{7}{c}{EHNN-Transformer} \\
        \Xhline{2\arrayrulewidth}\\[-1em]
        \\[-1em] & \hspace{0.5cm} & $lr$ & $wd$ & $d$ & $do_{mlp}$ & \hspace{0.5cm} & $lr$ & $wd$ & $h$ & $d$ & $do_{mlp}$ & $do_{local}$ & $do_{global}$\\
        \\[-1em]\Xhline{2\arrayrulewidth}\\[-1em]
        \\[-1em] Willow & & 2e$^{-4}$ & 0.5 & 32 & 0 & & 2e$^{-4}$ & 0.5 & 4 & 32 & 0.1 & 0 & 0.5\\
        \\[-1em] PASCAL-VOC & & 2e$^{-4}$ & 0.5 & 32 & 0 & & 2e$^{-4}$ & 0.5 & 4 & 128 & 0 & 0 & 0\\
        \\[-1em]\Xhline{2\arrayrulewidth}
    \end{tabular}
\end{adjustbox}
\end{subtable}
\end{table}

\subsubsection{Synthetic $k$-edge Identification}
For synthetic $k$-edge identification, we used small datasets composed of 100 train and 20 test hypergraphs, each with 100 nodes and randomly wired 10 hyperedges.
In input hypergraph, we pick a random hyperedge and mark its nodes with a binary label.
The task is to identify (classify) every other nodes whose hyperedge order is the same with the marked one.
For default training set, we sample hyperedges of orders $\in\{2, ..., 10\}$.
To further test generalization of models to unseen orders, we use two additional training sets where hyperedges are sampled \emph{without} order-$\{5, 6, 7\}$ hyperedges (interpolation) or order-$\{8, 9, 10\}$ hyperedges (extrapolation) as in Table~\ref{table:k-edge_identification}.
For all models, we use simple two-layer architecture that first converts the node labels to hyperedge features ($V\to E$) then maps them back to nodes ($E\to V$) for classification.
For all models we set hidden dimension to 64, and for EHNN-Transformer and AllSetTransformer we set number of attention heads to 4.

\subsubsection{Semi-supervised Classification}
\begin{table}[t!]
\begin{center}
\caption{
Results for semi-supervised node classification.
Average accuracy (\%) over 20 runs are shown with standard deviation.
}
\label{table:semi_supervised_with_std}
\resizebox{\textwidth}{!}{
\begin{tabular}{cccccccccccccccc}
\toprule
 & Zoo & 20Newsgroups & mushroom & NTU2012 & ModelNet40 & Yelp & House(1) & Walmart(1) & House(0.6) & Walmart(0.6) & avg. rank ($\uparrow$)\\
\midrule
MLP & 87.18 ± 4.44 & \cellcolor{gray!60}\textbf{81.42 ± 0.49} & \cellcolor{gray!60}\textbf{100.00 ± 0.00} & 85.52 ± 1.49 & 96.14 ± 0.36 & 31.96 ± 0.44 & 67.93 ± 2.33 & 45.51 ± 0.24 & 81.53 ± 2.26 & 63.28 ± 0.37 & 6.4 \\
CEGCN & 51.54 ± 11.19 & OOM & 95.27 ± 0.47 & 81.52 ± 1.43 & 89.92 ± 0.46 & OOM & 62.80 ± 2.61 & 54.44 ± 0.24 & 64.36 ± 2.41 & 59.78 ± 0.32 & 11.5 \\
CEGAT & 47.88 ± 14.03 & OOM & 96.60 ± 1.67 & 82.21 ± 1.23 & 92.52 ± 0.39 & OOM & \cellcolor{blue!30}69.09 ± 3.00 & 51.14 ± 0.56 & 77.25 ± 2.53 & 59.47 ± 1.05 & 10.5 \\
HNHN & 93.59 ± 5.88 & \cellcolor{blue!30}81.35 ± 0.61 & \cellcolor{gray!60}\textbf{100.00 ± 0.01} & \cellcolor{blue!30}89.11 ± 1.44 & 97.84 ± 0.25 & 31.65 ± 0.44 & 67.80 ± 2.59 & 47.18 ± 0.35 & 78.78 ± 1.88 & 65.80 ± 0.39 & 5.9 \\
HGNN & 92.50 ± 4.58 & 80.33 ± 0.42 & 98.73 ± 0.32 & 87.72 ± 1.35 & 95.44 ± 0.33 & 33.04 ± 0.62 & 61.39 ± 2.96 & 62.00 ± 0.24 & 66.16 ± 1.80 & 77.72 ± 0.21 & 7.8 \\
HCHA & 93.65 ± 6.15 & 80.33 ± 0.80 & 98.70 ± 0.39 & 87.48 ± 1.87 & 94.48 ± 0.28 & 30.99 ± 0.72 & 61.36 ± 2.53 & 62.45 ± 0.26 & 67.91 ± 2.26 & 77.12 ± 0.26 & 8.1 \\
HyperGCN & N/A & \cellcolor{blue!30}81.05 ± 0.59 & 47.90 ± 1.04 & 56.36 ± 4.86 & 75.89 ± 5.26 & 29.42 ± 1.54 & 48.31 ± 2.93 & 44.74 ± 2.81 & 78.22 ± 2.46 & 55.31 ± 0.30 & 12.4 \\
UniGCNII & 93.65 ± 4.37 & \cellcolor{blue!30}81.12 ± 0.67 & 99.96 ± 0.05 & \cellcolor{blue!30}89.30 ± 1.33 & 98.07 ± 0.23 & 31.70 ± 0.52 & 67.25 ± 2.57 & 54.45 ± 0.37 & 80.65 ± 1.96 & 72.08 ± 0.28 & 5.8 \\
HAN (full batch) & 85.19 ± 8.18 & OOM & 90.86 ± 2.40 & 83.58 ± 1.46 & 94.04 ± 0.41 & OOM & \cellcolor{blue!30}71.05 ± 2.26 & OOM & \cellcolor{blue!30}83.27 ± 1.62 & OOM & 9.9\\
HAN (minibatch) & 75.77 ± 7.10 & 79.72 ± 0.62 & 93.45 ± 1.31 & 80.77 ± 2.36 & 91.52 ± 0.96 & 26.05 ± 1.37 & 62.00 ± 9.06 & 48.57 ± 1.04 & 82.04 ± 2.68 & 63.1 ± 0.96 & 10.6\\
\midrule
AllDeepSets & \cellcolor{blue!30}95.39 ± 4.77 & \cellcolor{blue!30}81.06 ± 0.54 & 99.99 ± 0.02 & 88.09 ± 1.52 & 96.98 ± 0.26 & 30.36 ± 1.57 & 67.82 ± 2.40 & 64.55 ± 0.33 & 80.70 ± 1.59 & 78.46 ± 0.26 & 5.4\\
AllSetTransformer & \cellcolor{gray!60}\textbf{97.50 ± 3.59} & \cellcolor{blue!30}81.38 ± 0.58 & \cellcolor{gray!60}\textbf{100.00 ± 0.00} & \cellcolor{blue!30}88.69 ± 1.24 & \cellcolor{blue!30}98.20 ± 0.20 & \cellcolor{gray!60}\textbf{36.89 ± 0.51} & \cellcolor{blue!30}69.33 ± 2.20 & 65.46 ± 0.25 & \cellcolor{blue!30}83.14 ± 1.92 & 78.46 ± 0.26 & 2.4\\
\midrule
EHNN-MLP & 91.15 ± 6.13 & \cellcolor{blue!30}81.31 ± 0.43 & 99.99 ± 0.03 & 87.35 ± 1.42 & 97.74 ± 0.21 & 35.80 ± 0.77 & 67.41 ± 2.83 & 65.65 ± 0.36 & 82.29 ± 1.87 & 78.80 ± 0.18 & 5.0 \\
EHNN-Transformer & 93.27 ± 6.59 & \cellcolor{gray!60}\textbf{81.42 ± 0.53} & \cellcolor{gray!60}\textbf{100.00 ± 0.00} & \cellcolor{gray!60}\textbf{89.60 ± 1.36} & \cellcolor{gray!60}\textbf{98.28 ± 0.18} & \cellcolor{blue!30}36.48 ± 0.40 & \cellcolor{gray!60}\textbf{71.53 ± 2.59} & \cellcolor{gray!60}\textbf{68.73 ± 0.35} & \cellcolor{gray!60}\textbf{85.09 ± 2.05} & \cellcolor{gray!60}\textbf{80.05 ± 0.27} & \cellcolor{gray!60}\textbf{1.6} \\
\bottomrule
\end{tabular}
}
\end{center}
\end{table}

We use 8 datasets used in Chien~et.~al.~\cite{chien2022you} (Table~\ref{table:node_classification}).
Among them, three datasets (20Newsgroups, Mushroom, Zoo) are from the UCI Categorical Machine Learning Repository, and two (ModelNet40, NTU2012) are from computer vision domain where the objective is to classify visual objects.
Other three (Yelp, House, Walmart) were crafted in Chien~et.~al.~\cite{chien2022you}.
In Yelp, the nodes correspond to restaurants where the node labels correspond to the number of stars provided in the yelp "restaurant" catalog.
For House, each node and label is a member of the US House of Representatives and their political party.
Nodes of Walmart represent products where node label corresponds to product category, and set of products purchased together are tied with a hyperedge.
Since House and Walmart does not contain node features, the node features are created by adding Gaussian noise to one-hot node labels.
We fix node feature dimension to 100 as in prior work.

We use a composition of two layers that maps node features to hyperedge features ($V\to E$) and maps the features back to nodes ($E\to V$) as a module, and use a single module for EHNN-MLP and use a stack of two modules for EHNN-Transformer.
For hyperparameter search, we fix learning rate as 0.001 and run grid search over hidden dimension \{64, 128, 256, 512\}, weight decays \{0, 0.00001\}, and MLP dropout \{0, 0.2\}.
For EHNN-Transformer, we also search number of heads over \{4, 8\} for multi-head attention, attention output dropout \{0, 0.1\}, and classifier hidden dimension \{64, 128\}.
The final selection of hyperparameters based on grid search are outlined in Table~\ref{table:node_classification_hyperparameter}.

Along with the average accuracy reported in the main text, we report the standard deviation over 20 runs with random train/val/test splits and model initialization in Table~\ref{table:semi_supervised_with_std}.

\subsubsection{Visual Keypoint Matching}
For evaluation under inductive setting, we test the performance of EHNN on keypoint matching benchmarks implemented in the ThinkMatch repository~\cite{wang2019neural}.
We borrow two real image datasets, Willow ObjectClass~\cite{cho2013learning} and PASCAL-VOC~\cite{everingham2010the, bourdev2009poselets} with Berkeley annotations (Table~\ref{table:keypoint_matching}), as well as provided train/test splitting pipelines from the repository.
For training, we use binary cross entropy loss between two permutation matrices: one from ground-truth matching and another from node classification on the association hypergraph.
Performance at test time is evaluated by measuring the matching accuracy via F1-score.
We compare our EHNN methods against 10 different methods, by reproducing their performance based on implementation in ThinkMatch\footnote{\url{https://github.com/Thinklab-SJTU/ThinkMatch}}.
Several methods are excluded due to numerical instability errors.
For EHNN methods, we follow the same keypoint feature extraction and association hypergraph construction procedure as in NHGM-v2: we simply replace the local message-passing GNN module in NHGM-v2 with an EHNN-MLP/Transformer.

To find optimal hyperparameters for EHNN-MLP, we run grid search over hidden dimension sizes \{16, 32, 64, 96, 128\}. For the Transformer variant, we search through number of layers \{1, 2, 3\} and number of attention heads \{2, 4, 8, 16\} in addition to hidden dimension sizes.
For EHNN-Transformer we also check applying dropouts within $[0.1, 0.5]$ separately to global and local attention outputs respectively.
The final selected hyperparameters are outlined in Table~\ref{table:keypoint_matching_hyperparameter}.
For training EHNN methods, we use the default setting used for NHGM-v2 in ThinkMatch.
For Willow, we train for 10 epochs with learning rate that starts at 2e$^{-4}$ and decays into half at epoch 2.
For PASCAL-VOC, we train for 20 epochs with learning rate that also starts 2e$^{-4}$ and decays into half at epoch 2.


\end{document}